\newtheorem{example}{Example}
\newtheorem{theorem}{Theorem}
\newtheorem{definition}{Definition}
\newtheorem{prop}{Proposition}
\newtheorem{corollary}{Corollary}
\newcommand{\amendb}[1]{{\color{black}{#1}}}
\newcommand{\amendc}[1]{{\color{black}{#1}}}
\newcommand{\Reg}{\mathcal{R}}
\newcommand{\at}{\mathcal{D}}
\newcommand{\ie}{{\it i.e.}}
\title{Automatic Verification of Sound Abstractions for Generalized Planning}
\author{
Zhenhe Cui
\and
Weidu Kuang\and
Yongmei Liu
\affiliations
Dept. of Computer Science, Sun Yat-sen University, Guangzhou 510006, China
\emails
\{cuizhh3, kuangwd\}@mail2.sysu.edu.cn,
ymliu@mail.sysu.edu.cn
}
\author{
Zhenhe Cui
\and
Weidu Kuang\and
Yongmei Liu
\affiliations
Dept. of Computer Science, Sun Yat-sen University, Guangzhou 510006, China
\emails
\{first, second\}@example.com,
third@other.example.com,
fourth@example.com
}
\begin{document}

\maketitle

\begin{abstract}

  Generalized planning studies the computation of general solutions  for a set of planning problems. Computing general solutions with correctness guarantee has long been a key issue in generalized planning. Abstractions are widely used to solve generalized planning problems. Solutions of sound abstractions are those with correctness guarantees for generalized planning problems. Recently, Cui et al. proposed a uniform abstraction framework for generalized planning. They gave the model-theoretic definitions of sound and complete abstractions for generalized planning problems. In this paper, based on Cui et al.'s work, we explore automatic verification of sound abstractions for generalized planning. We firstly present the proof-theoretic characterization for sound abstraction. Then, based on the characterization, we give a sufficient condition for sound abstractions which is first-order verifiable.  To implement it, we exploit regression extensions, and develop methods to handle counting and transitive closure. Finally, we implement a sound abstraction verification system and report experimental results on several domains.
\end{abstract}

\section{Introduction}

Generalized planning, where a single plan works for a set of similar planning problems, has received continual attention in the AI community \cite{2008Learning,2011Generalized,2018Features,AguasCJ19,2019Illanes}. For example, the general solution ``while the block $A$ is not clear, pick the top block above  $A$ and place it on the table" meets the goal $clear(A)$ no matter how many blocks the tower has. Computing general solutions with correctness guarantee has long been a key issue in generalized planning. However, so far only limited correctness results have been obtained, mainly for
 the so-called  one-dimensional problems \cite{HuL10} and extended LL-domains \cite{SrivastavaIZ11}.

Abstractions are widely used to solve generalized planning problems.
The idea is to develop an abstract model of the problem, which is easier to solve, and exploit a solution in the abstract model to find a solution in the concrete model. A  popular kind of abstract models for generalized planning are qualitative numerical planning (QNP) problems, introduced by \cite{2011Qualitative}: they showed that QNPs are decidable, and proposed a generate-and-test method to solve QNPs.
\citeauthor{2018Features} \shortcite{2018Features} proposed solving a generalized classical planning problem by abstracting it into a QNP, they showed that if the abstraction is sound, then a solution to the QNP is also a solution to the original problem.  Here, soundness of abstractions is the key to guarantee correctness of solutions to the original problem.

The automatic generation of sound abstractions for generalized planning problems has attracted the attention of researchers in recent years. \citeauthor{BonetFG19} \shortcite{BonetFG19} proposed  learning a QNP abstraction for a generalized planning problem from a sample set of instances, however, the abstraction is guarateed to be sound only for the sample instances.
 Further, \citeauthor{Bonet19} \shortcite{Bonet19} showed how to obtain  first-order formulas that define a set of instances on which the abstraction is guaranteed to be sound.
\citeauthor{2019Illanes} \shortcite{2019Illanes} considered solving a class of generalized planning problems by automatically deriving a sound QNP abstraction from an instance of the problem, by introducing a counter for each set of indistinguishable objects, however, this class of problems is too restricted.
%As far as we know, there have been no attempts to automatically check if an abstraction is sound for the original problem.

Recently, \citeauthor{2017Abstraction} \shortcite{2017Abstraction} proposed an agent abstraction framework
based on the situation calculus \cite{Reiter01} and Golog \cite{1997GOLOG}. They related a high-level action theory to a low-level action theory by the notion of a refinement mapping, which specifies
how each high-level action is implemented by a low-level
Golog program and how each high-level fluent can
be translated into a low-level formula.
Based on their work,
\citeauthor{Cui21} \shortcite{Cui21} proposed a uniform abstraction framework for generalized planning.
They formalized a generalized planning problem as a triple of a basic action theory, a trajectory constraint, and a goal.
They gave model-theoretic definitions of sound/complete
 abstractions of a generalized planning problem,
 and showed that solutions to a
 generalized planning problem are nicely related to those of its
 sound/complete abstractions. %Intuitively, a high-level generalized planning problem is a sound abstraction of a low-level generalized planning problem, if for any low-level planning instance $M_l$, there exist two high-level planning instance $M_h^1$ and $M_h^2$ such that any trajectory of $M_h^1$ is simulated by some refinement trajectory of $M_l$, and any refinement trajectory of $M_l$ is simulated by  some trajectory of $M_h^2$.
%Thus the refinement of any solution to the high-level generalized planning problem is a solution to the low-level generalized planning problem.
In particular, the refinement of any solution to a sound abstraction is a solution to the original problem.

The significance of the research line initiated by \citeauthor{2018Features} \shortcite{2018Features}
is that abstract problems are easier to solve, soundness of abstractions together with correctness of high-level solutions guarantee correctness of low-level solutions.
In this paper, based on Cui et al.'s work, we explore automatic verification of sound abstractions for generalized planning.
First of all, we give a proof-theoretic characterization of sound abstractions for generalized planning in the situation calculus. Based on the characterization, we give a sufficient condition for sound abstractions which is first-order verifiable.
To verify the condition with first-order theorem provers, we exploit universal and existential extensions of regression, and develop methods to handle counting and transitive closure.
Using the SMT solver Z3, we implemented a verification system and experimented with 7 domains: 5 from the literature and 2 made by us. Experimental results showed that our system was able to successfully verify soundness of abstractions for all domains in seconds.

\section{Preliminaries}

\subsection{Situation Calculus and Golog}

The situation calculus  \cite{Reiter01} is a many-sorted first-order language with some second-order ingredients suitable for describing dynamic worlds. There are three disjoint sorts: $action$ for actions, $situation$ for situations, and $object$ for everything else. The language also has the following components: a situation constant $S_{0}$ denoting the initial situation; a binary function $do(a,s)$ denoting the successor situation to $s$ resulting from performing action $a$; a binary relation $Poss(a,s)$ indicating that action $a$ is possible in situation $s$; a binary relation $s\sqsubseteq s'$, meaning situation $s$ is a sub-history of $s'$; a set of relational (functional) fluents,  \ie, predicates (functions) taking a situation term as their last argument. A formula is uniform in $s$ if it does not mention any situation term other than $s$. We call a formula with all situation arguments eliminated a situation-suppressed formula. For a situation-suppressed formula $\phi$, we use $\phi[s]$ to denote the formula obtained from $\phi$ by restoring $s$ as the situation arguments to all fluents. A situation $s$ is executable if it is possible to perform the actions in $s$ one by one: $Exec(s)\doteq \forall a,s'.do(a,s')\sqsubseteq s\supset Poss(a,s')$.

In the situation calculus, a particular domain of application can be specified by a basic action theory (BAT) of the form
\begin{center}
  $\mathcal{D}=\Sigma \cup \mathcal{D}_{ap} \cup \mathcal{D}_{ss}\cup \mathcal{D}_{una}\cup \mathcal{D}_{S_{0}}$,
\end{center}
where $\Sigma$ is the set of the foundational axioms for situations; $\mathcal{D}_{ap}$ is a set of action precondition axioms, one for each action function $A$ of the form $Poss(A(\vec{x}),s)\equiv \Pi_A(\vec{x},s)$, where $\Pi_A(\vec{x},s)$ is uniform in $s$; $\mathcal{D}_{ss}$ is a set of successor state axioms (SSAs), one for each relation fluent symbol $P$ of the form $P(\vec{x},do(a,s))\equiv \phi_{P}(\vec{x},a,s)$, and one for each functional fluent symbol $f$ of the form $f(\vec{x},do(a,s))=y\equiv \psi_{f}(\vec{x},y,a,s)$, where $\phi_{P}(\vec{x},a,s)$ and $\psi_{f}(\vec{x},y,a,s)$ are uniform in $s$; $\mathcal{D}_{una}$ is the set of unique name axioms for actions; $\mathcal{D}_{S_{0}}$ is the initial knowledge base stating facts about $S_{0}$.

%\begin{align*}
%    &Poss(mt(x),s) \equiv clear(x,s) \wedge \neg ontable(x,s); \\
%    &Poss(move(x,y),s) \equiv clear(x,s) \wedge clear(y,s) \wedge x \neq y;\\
%    %\vspace{5cm}
%    & clear(x,do(a,s)) \equiv clear(x,s) \wedge \forall y.a \neq move(y,x)
%  \\
%    & \text{\hspace*{0.5cm}} \vee \exists y.on(y,x,s) \wedge(a=mt(y) \vee \exists z.a=move(y, z));
%    \\
%    &ontable(x,do(a,s)) \equiv a=mt(x)
%    \\
%  & \text{\hspace*{0.5cm}} \vee ontable(x,s) \wedge \neg(\exists y) a=move(x, y);
%    \\
%    &on(x,y,do(a,s)) \equiv a=move(x,y) \vee on(x,y,s) \wedge\\
%    & \text{\hspace*{0.5cm}}  \neg(\exists z) a=move(x, z) \wedge a \neq mt(x).
%\end{align*}

\citeauthor{1997GOLOG} \shortcite{1997GOLOG} introduced a high-level programming language Golog with the following syntax:
\begin{center}
  $\delta ::= \alpha |\ \varphi ?\ |\ \delta_{1};\delta_{2}\ |\ \delta_{1}|\delta_{2}\ |\ \pi x.\delta\ |\ \delta^{*}$,
\end{center}
where $\alpha$ is an action term; $\varphi$ is a situation-suppressed formula and  $\varphi ?$ tests whether $\varphi$ holds; program $\delta_{1};\delta_{2}$ represents the sequential execution of $\delta_{1}$ and $\delta_{2}$; program $\delta_{1}|\delta_{2}$ denotes the non-deterministic choice between $\delta_{1}$ and $\delta_{2}$; program $\pi x.\delta$ denotes the non-deterministic choice of a value for parameter $x$ in $\delta$; program $\delta^{*}$ means executing program $\delta$ for a non-deterministic number of times.

Golog has two kinds of semantics \cite{ConGOLOG}: transition semantics and evaluation semantics. In transition semantics, a configuration of a Golog program is a pair $(\delta, s)$ of a situation $s$ and a program $\delta$ remaining to be executed. The predicate $Trans(\delta,s,\delta', s')$ means that there is a transition from configuration $(\delta,s)$ to $(\delta', s')$ in one elementary step. The predicate $Final(\delta,s)$ means that the configuration $(\delta,s)$ is a final one, which holds if program $\delta$ may legally terminate in situation $s$. In evaluation semantics, the predicate $Do(\delta, s, s')$ means that executing the program $\delta$ in situation $s$ will terminate in a situation $s'$. Do can be defined with $Trans$ and $Final$ as follows:
 $ Do(\delta,s,s')\doteq \exists \delta'. Trans^{*}(\delta,s,\delta',s')\land Final(\delta',s'),$
 where, $Trans^{*}$ denotes the transitive closure of $Trans$.

\subsection{Regression and Its Extensions}

%Regression was proposed by \citeauthor{Reiter01} \shortcite{Reiter01} to reduce the evaluation of a sentence to a first-order theorem proving task in the initial database.
Regression is an important computational mechanism for reasoning about deterministic actions and their effects in the situation calculus \cite{Reiter01}. The following is the definition of a one-step regression operator:
%then we introduce two notions of extended regression: existentially extended regression, and universally extended regression.

\begin{definition}\label{def_reg}\rm
Given a BAT $\mathcal{D}$ and a formula $\phi$. We use $\Reg_{\mathcal{D}}[\phi]$ to denote the formula obtained from $\phi$ by the following steps: for each term $f(\vec{t}, do(\alpha, \sigma))$, replace $\phi$ with $\exists y. \psi_{f}(\vec{t},y,\alpha,\sigma)\land \phi[f(\vec{t}, do(\alpha,\sigma))/y]$\footnote{$\phi[t'/t]$ denotes that formula obtained from $\phi$ by replacing all occurrences of $t'$ in $\phi$ by $t$.}; replace each atom $P(\vec{t}, do(\alpha, \sigma))$ with $\phi_P(\vec{t}, \alpha, \sigma)$;  replace each precondition atom $Poss(A(\vec{t}), \sigma)$ with $\Pi_{A}(\vec{t}, \sigma)$; and further simplify the result with $\at_{una}$.

\end{definition}

\begin{prop}
\label{prop2} $\at \models \phi \equiv \Reg_{\mathcal{D}}[\phi]$.
\end{prop}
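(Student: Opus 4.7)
The plan is to prove $\mathcal{D} \models \phi \equiv \Reg_{\mathcal{D}}[\phi]$ by showing that each of the three syntactic rewriting steps in Definition~\ref{def_reg} preserves logical equivalence modulo $\mathcal{D}$, and then composing them. Since the rewrites are applied inside arbitrary formula contexts, I would first state a congruence lemma observing that if $A \equiv B$ holds in $\mathcal{D}$ then substituting $B$ for an occurrence of $A$ inside any formula preserves $\mathcal{D}$-equivalence; this legitimises handling each rewrite in isolation.

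For the relational fluent rewrite, replacing an atom $P(\vec{t}, do(\alpha,\sigma))$ by $\phi_P(\vec{t},\alpha,\sigma)$ follows immediately from the SSA $P(\vec{x}, do(a,s)) \equiv \phi_P(\vec{x}, a, s)$ in $\mathcal{D}_{ss}$ by instantiation. The precondition rewrite of $Poss(A(\vec{t}),\sigma)$ to $\Pi_A(\vec{t},\sigma)$ is justified identically by the corresponding axiom in $\mathcal{D}_{ap}$. The final $\mathcal{D}_{una}$ simplification is equivalence-preserving relative to $\mathcal{D}_{una}$ by its soundness.

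The subtle step is the functional fluent case. To justify replacing $\phi$ by $\exists y.\,\psi_f(\vec{t},y,\alpha,\sigma)\land\phi[f(\vec{t},do(\alpha,\sigma))/y]$, I would introduce a fresh variable $y$ and first use the purely logical equivalence
\[
\phi \;\equiv\; \exists y.\, y = f(\vec{t}, do(\alpha,\sigma)) \land \phi[f(\vec{t}, do(\alpha,\sigma))/y],
\]
which holds because $y$ is a fresh name for $f(\vec{t}, do(\alpha,\sigma))$. Then, applying the functional SSA $f(\vec{x}, do(a,s)) = y \equiv \psi_f(\vec{x}, y, a, s)$ in $\mathcal{D}_{ss}$ to the equality atom yields exactly the replacement prescribed by the definition.

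To assemble the local equivalences into a proof of the proposition, I would induct on a measure that strictly decreases with each rewrite; a natural choice is the multiset of $do$-nesting depths occurring at situation arguments of fluent and $Poss$ atoms, ordered by the multiset order. The base case, in which no such occurrences remain, is trivial because $\Reg_{\mathcal{D}}[\phi]$ differs from $\phi$ only by $\mathcal{D}_{una}$-preserving simplification. The main obstacle I expect is the functional fluent step: beyond the small auxiliary equivalence above, one must keep the freshness of $y$ consistent when several occurrences of the same term $f(\vec{t}, do(\alpha,\sigma))$ appear in $\phi$, and carefully argue that simultaneous rewriting of independent subterms commutes with the inductive unfolding.
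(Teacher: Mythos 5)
Your proposal is correct and follows the standard argument: the paper states this proposition without proof, as it is the (one-step) regression theorem inherited from Reiter's situation calculus, and the route you take --- a replacement-of-equivalents lemma, instantiation of the successor state and precondition axioms for the relational and $Poss$ rewrites, the fresh-variable equivalence $\phi \equiv \exists y.\, y = f(\vec{t}, do(\alpha,\sigma)) \land \phi[f(\vec{t},do(\alpha,\sigma))/y]$ combined with the functional SSA, and soundness of the $\mathcal{D}_{una}$ simplification --- is exactly the textbook proof. The only caveats, which you already flag, are bookkeeping ones (freshness of the introduced variables and processing nested functional fluent terms innermost-first), so there is no gap to report.
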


Luo et al. \shortcite{2020Forgetting} presented the existentially extended regression,  notation $\mathcal{R}^{E}[\phi(s),\delta]$ denotes a state formula expressing that there exists an execution of program $\delta$ starting from $s$  making $\phi$ hold.

\begin{definition}\rm\label{existentially-extended-regression}
  Given a situation-suppressed formula $\phi$ and a Golog program $\delta$, the extended regression $\mathcal{R}^{E}[\phi(s),\delta]$ can be inductively defined as follows:

\begin{itemize}
  \item $\mathcal{R}^{E}[\phi(s),\alpha]= \mathcal{R}_{\mathcal{D}}[Poss(\alpha,s)\land \phi(do(\alpha,s))]$,
  \item $\mathcal{R}^{E}[\phi(s),\psi?] = \psi[s]\land \phi(s)$,
  \item $\mathcal{R}^{E}[\phi(s),\delta_{1};\delta_{2}] = \mathcal{R}^{E}[\mathcal{R}^{E}[\phi(s),\delta_{2}],\delta_{1}]$,
  \item $\mathcal{R}^{E}[\phi(s),\delta_{1}|\delta_{2}] = \mathcal{R}^{E}[\phi(s),\delta_{1}]\lor \mathcal{R}^{E}[\phi(s),\delta_{2}]$,
  \item  $\mathcal{R}^{E}[\phi(s),(\pi x)\delta(x)] = (\exists x)\mathcal{R}^{E}[\phi(s),\delta(x)]$.
\end{itemize}
\end{definition}

\begin{prop} \label{prop_Eregression}
Given a basic action theory $\mathcal{D}$, a Golog program $\delta$ and a situation-suppressed formula $\phi$, we have:
\begin{center}
  $\mathcal{D} \models \mathcal{R}^{E}[\phi(s), \delta] \equiv \exists s'.Do(\delta, s, s') \wedge \phi[s'].$
\end{center}
\end{prop}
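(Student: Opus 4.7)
My plan is to proceed by structural induction on the Golog program $\delta$, following the five cases of Definition~\ref{existentially-extended-regression}. A preliminary observation I will need is that $\mathcal{R}^{E}[\phi(s),\delta]$ always produces a formula uniform in $s$; this fact, proved by an easy parallel induction, allows the inductive hypothesis to be applied to the intermediate formulas that arise in the composite cases.

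For the base case of a primitive action $\delta=\alpha$, I will unfold the definition to $\mathcal{R}_{\mathcal{D}}[Poss(\alpha,s)\land \phi(do(\alpha,s))]$ and invoke Proposition~\ref{prop2} to strip off the one-step regression, obtaining $Poss(\alpha,s)\land \phi[do(\alpha,s)]$. Unfolding $Do(\alpha,s,s')$ via $Trans$ and $Final$ shows that the only final configuration reachable from $(\alpha,s)$ sits at $s'=do(\alpha,s)$ and requires $Poss(\alpha,s)$, so $\exists s'.Do(\alpha,s,s')\land \phi[s']$ reduces to the same formula. For the test $\delta=\psi?$, the right-hand side collapses because $Do(\psi?,s,s')$ forces $s'=s$ together with $\psi[s]$, matching $\psi[s]\land \phi(s)$ directly.

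For the three composite constructs, I will combine the inductive hypothesis with standard Golog decomposition identities derivable from the $Trans$/$Final$ axioms:
\begin{align*}
Do(\delta_{1};\delta_{2},s,s') &\equiv \exists s''.\,Do(\delta_{1},s,s'')\land Do(\delta_{2},s'',s'),\\
Do(\delta_{1}|\delta_{2},s,s') &\equiv Do(\delta_{1},s,s')\lor Do(\delta_{2},s,s'),\\
Do(\pi x.\delta(x),s,s') &\equiv \exists x.\,Do(\delta(x),s,s').
\end{align*}
The choice and $\pi$ cases then follow immediately by pushing the disjunction or the existential quantifier through the hypothesis and then through the existential $\exists s'$ on the right-hand side.

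The main obstacle will be the sequence case, because the definition nests two calls to $\mathcal{R}^{E}$: the inner formula $\mathcal{R}^{E}[\phi(s),\delta_{2}]$ is not a surface-level situation-suppressed formula but rather a formula uniform in $s$ containing regression-introduced quantifiers. I will therefore state the inductive hypothesis for arbitrary formulas uniform in $s$ (not only for surface situation-suppressed $\phi$), using the preliminary observation to justify the enlarged scope. Once this is in place, applying the hypothesis to $\delta_{2}$ rewrites the inner call as $\exists s''.\,Do(\delta_{2},s,s'')\land\phi[s'']$, and then applying the hypothesis to $\delta_{1}$ with this formula in the role of $\phi$ yields $\exists s'.\,Do(\delta_{1},s,s')\land\exists s''.\,Do(\delta_{2},s',s'')\land\phi[s'']$, which, after renaming and invoking the sequence identity above, is exactly $\exists s''.\,Do(\delta_{1};\delta_{2},s,s'')\land\phi[s'']$ as required.
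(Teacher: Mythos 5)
The paper does not prove this proposition at all: it is imported as a known result from Luo et al.\ (2020), so there is no in-paper argument to compare against. Your structural induction is the standard way to establish it and is essentially correct: the base cases reduce correctly via Proposition~\ref{prop2} and the $Trans$/$Final$ characterizations of $Do$ for primitive actions and tests, the choice and $\pi$ cases are immediate, and you correctly identify the one genuine subtlety --- that the nested call $\mathcal{R}^{E}[\mathcal{R}^{E}[\phi(s),\delta_{2}],\delta_{1}]$ forces the inductive hypothesis to be stated for arbitrary formulas uniform in $s$ rather than only for situation-suppressed $\phi$, backed by the auxiliary observation that $\mathcal{R}^{E}$ preserves uniformity in $s$. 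One small point worth making explicit: applying the hypothesis to $\delta_{1}$ yields the \emph{syntactic} inner regression result evaluated at $s'$, and you then need that this is $\mathcal{D}$-equivalent to $\exists s''.Do(\delta_{2},s',s'')\land\phi[s'']$, i.e., that the inductive equivalence holds with the free situation variable instantiated to $s'$; this is harmless since the equivalence is entailed universally in $s$, but it deserves a sentence. Note also that, consistently with Definition~\ref{existentially-extended-regression}, the claim is only established for star-free programs, since $\mathcal{R}^{E}$ is not defined for $\delta^{*}$.
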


Li and Liu \shortcite{LiL15a} presented the universally extended regression, notation $\mathcal{R}^{U}[\phi(s),\delta]$ denotes a state formula expressing that all executions of program $\delta$ starting from $s$ making $\phi$ hold. To get the definition of universally extended regression, one can replace the symbols $\mathcal{R}^{E}$, $\land$, $\lor$, and $\exists$ in Definition \ref{existentially-extended-regression} with $\mathcal{R}^{U}$,  $\supset$, $\land$, and $\forall$, respectively.

%\begin{definition}\rm
%  Given a situation-suppressed formula $\phi$ and a Golog program $\delta$, the  universally extended regression $\mathcal{R}^{U}[\phi(s),\delta]$ is defined as follows:
%
%\begin{itemize}
%  \item $\mathcal{R}^{U}[\phi(s),\alpha] = \mathcal{R}_{\mathcal{D}}[Poss(\alpha,s)\supset \phi(do(\alpha,s))];$
%  \item $\mathcal{R}^{U}[\phi(s),\psi?] = \psi[s]\supset \phi(s);$
%  \item  $\mathcal{R}^{U}[\phi(s),\delta_{1};\delta_{2}]= \mathcal{R}^{U}[\mathcal{R}^{U}[\phi(s),\delta_{2}],\delta_{1}];$
%  \item  $\mathcal{R}^{U}[\phi(s),\delta_{1}|\delta_{2}]= \mathcal{R}^{U}[\phi(s),\delta_{1}]\land \mathcal{R}^{U}[\phi(s),\delta_{2}];$
%  \item  $\mathcal{R}^{U}[\phi(s),(\pi x)\delta(x)]= (\forall x)\mathcal{R}^{U}[\phi(s),\delta(x)].$
%\end{itemize}
%\end{definition}

\begin{prop} \label{prop_Uregression}
Given a basic action theory $\mathcal{D}$, a Golog program $\delta$ and a situation-suppressed formula $\phi$, we have:
\begin{center}
  $\mathcal{D} \models \mathcal{R}^{U}[\phi(s),\delta]\equiv \forall s'. [Do(\delta, s, s')\supset \phi(s')].$
\end{center}
\end{prop}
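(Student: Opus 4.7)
The plan is structural induction on the Golog program $\delta$, mirroring the clause-by-clause ``dualisation'' that defines $\mathcal{R}^{U}$ from $\mathcal{R}^{E}$ in the paragraph preceding the statement. The cases to cover are the five constructs appearing in that definition: primitive action, test, sequence, non-deterministic choice, and non-deterministic argument; since that definition is silent on $\delta^{*}$, so is the claim. I would state the induction hypothesis uniformly over situations, that is, for every situation-suppressed $\phi$ and every situation $s$, $\mathcal{D}\models \mathcal{R}^{U}[\phi(s),\delta']\equiv \forall s'.\,Do(\delta',s,s')\supset \phi(s')$, so that it is readily applicable at intermediate situations in the sequence case.

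For the base cases I would unfold $Do$ with its transition semantics. For a primitive action $\alpha$, $Do(\alpha,s,s')\equiv Poss(\alpha,s)\land s'=do(\alpha,s)$, so the right-hand side of the equivalence collapses to $Poss(\alpha,s)\supset \phi(do(\alpha,s))$, which by Proposition \ref{prop2} is $\mathcal{D}$-equivalent to $\mathcal{R}_{\mathcal{D}}[Poss(\alpha,s)\supset \phi(do(\alpha,s))]$, exactly $\mathcal{R}^{U}[\phi(s),\alpha]$. The test case is analogous, using $Do(\psi?,s,s')\equiv \psi[s]\land s'=s$, which reduces the right-hand side directly to $\psi[s]\supset \phi(s)$.

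The inductive cases are pure first-order manipulation once $Do$ is unfolded. For $\delta_{1}|\delta_{2}$, splitting $\forall s'.\,(Do(\delta_{1},s,s')\lor Do(\delta_{2},s,s'))\supset \phi(s')$ into a conjunction of two universal implications and applying the induction hypothesis to each yields $\mathcal{R}^{U}[\phi(s),\delta_{1}]\land \mathcal{R}^{U}[\phi(s),\delta_{2}]$. For $\pi x.\,\delta(x)$, pulling $\exists x$ out of the antecedent of $\forall s'.\,(\exists x.\,Do(\delta(x),s,s'))\supset \phi(s')$ produces $\forall x.\,\forall s'.\,Do(\delta(x),s,s')\supset \phi(s')$, which becomes $\forall x.\,\mathcal{R}^{U}[\phi(s),\delta(x)]$ by the hypothesis applied pointwise in $x$. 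The sequence case is where I expect the main friction: unfolding $Do(\delta_{1};\delta_{2},s,s')$ to $\exists s''.\,Do(\delta_{1},s,s'')\land Do(\delta_{2},s'',s')$ and prenexing gives $\forall s''.\,Do(\delta_{1},s,s'')\supset (\forall s'.\,Do(\delta_{2},s'',s')\supset \phi(s'))$; applying the hypothesis for $\delta_{2}$ at the intermediate situation $s''$ turns the inner universal into $\mathcal{R}^{U}[\phi(s''),\delta_{2}]$, and a second application of the hypothesis, this time for $\delta_{1}$ with $\mathcal{R}^{U}[\phi(\cdot),\delta_{2}]$ serving as the new situation-suppressed specification, delivers the required $\mathcal{R}^{U}[\mathcal{R}^{U}[\phi(s),\delta_{2}],\delta_{1}]$. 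The care needed here, and the reason I would state the hypothesis uniformly in $s$ at the outset, is precisely that the outer induction step on $\delta_{1}$ must be invoked with a formula whose underlying situation is the intermediate $s''$ rather than the original $s$.
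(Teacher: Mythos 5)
Your proof is correct and is the standard argument; note that the paper itself states this proposition only as an imported preliminary (crediting Li and Liu) and gives no proof of its own, so there is nothing in the text to diverge from. Your structural induction over the five program constructs, with the induction hypothesis stated uniformly over all situation-suppressed $\phi$ and all situations so that the sequence case can invoke it at the intermediate situation $s''$ (and with the $\delta^{*}$ case rightly excluded, since the extended-regression definition omits it), is exactly how this result is established in the cited work.
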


\subsection{Generalized Planning Abstraction Framework}

Situation calculus cannot express the property of termination, counting, transitive closure, and non-deterministic actions. \citeauthor{Cui21} \shortcite{Cui21} extended the situation calculus for these four aspects: to represent the property of termination, following \cite{2004Representing}, they use the situation calculus with infinite histories; to represent planning  with non-deterministic actions, they treat a non-deterministic action as a non-deterministic program; to extended the situation calculus with counting, following the logic FOCN \cite{Kuske2017First}, they introduce counting terms of the form $\# \bar{y}.\varphi$, meaning the number of tuples $\bar{y}$ satisfying formula $\varphi$; transitive closure is often used to define counting terms, following transitive closure logic \cite{transclosure}, they introduced the notation $[TC_{\bar{x},\bar{y}} \varphi](\bar{u},\bar{v})$, where $\varphi(\bar{x},\bar{y})$ is a formula with $2k$ free variables, $\bar{u}$ and $\bar{v}$ are two $k$-tuples of terms, which says that the pair $(\bar{u},\bar{v})$ is contained in the reflexive transitive closure of the binary relation on $k$-tuples that is defined by $\varphi$.%, in case $P(x,y)$ is a binary predicate, we simply write $P^*(x,y)$ to mean $[TC_{x,y} P(x,y)](x,y)$.

In the following, we introduce the generalized planning abstraction framework proposed in \cite{Cui21}. %Firstly, we present the definition of generalized planning problems:

\begin{definition} \rm
A generalized planning problem is a triple $\mathcal{G}=\langle \mathcal{D},C,G \rangle$, where $\mathcal{D}$ is a BAT, $C$ is a trajectory constraint, i.e., a situation calculus formula with a free variable of infinite histories, and $G$ is a goal condition.
\end{definition}

 In the presence of non-deterministic actions, solutions to planning problems are programs whose execution under certain trajectory constraints are guaranteed to terminate and achieve the goal.

\begin{example}\rm \label{eg-clearing a block}
   In the blocks world, an agent can perform two kinds of actions: $mt(x)$ (move $x$ to the table, provided $x$ is being held, and $unstack(x,y)$ (unstack $x$ from $y$, provided $x$ is clear and $x$ is on $y$ ).There are four fluents: $clear(x)$, $ontable(x)$, $on(x,y)$, and $holding(x)$. In this problem, the trajectory constraint $C$ is $\top$, the goal state $G$ is $clear(A)$, some example axioms from $\mathcal{D}$ are as follows:\\
  \textbf{Precondition Axioms}:\\
  \vspace{1mm}
  \hspace*{0em}$Poss(mt(x),s) \equiv clear(x,s) \wedge \neg ontable(x,s) $.\\
  \textbf{Successor State Axioms}:\\
  \vspace{1mm}
  \hspace*{0em} $on(x,y,do(a,s)) \equiv  on(x,y,s) \wedge \neg a=unstack(x, y)$.\\
  \textbf{Initial Situation Axiom}:\\
  \hspace*{0.55em}$\exists x.on^{+}(x,A)\land ontable(A)\land \neg holding(x),$ where the formula $on^{+}(x,A)$ is a transitive closure formula with a concise form, which means that the block $x$ is above the block $A$.

\end{example}

Abstractions for generalized planning problems are specified by the following notion of refinement mapping:

\begin{definition}\rm \label{Ext-Refinement-Mapping}
A function $m$ is a refinement mapping from $\mathcal{G}_{h}\amendb{=\langle \mathcal{D}_{h},C_{h},G_{h} \rangle}$ to $\mathcal{G}_{l}\amendb{=\langle \mathcal{D}_{l},C_{l},G_{l} \rangle}$ if
for each HL \amendb{deterministic or non-deterministic} action type $A$, $m(A(\vec{x}))=\delta_{A}(\vec{x})$, where $\delta_{A}(\vec{x})$ is a LL program;
for each HL relational fluent $P$, $m(P(\vec{x}))=\phi_{P}(\vec{x})$, where $\phi_{P}(\vec{x})$ is a LL situation-suppressed formula;
for each HL functional fluent $f$, $m(f(\vec{x}))=\tau_f(\vec{x})$, where $\tau_f(\vec{x})$ is a LL (counting) term.
\end{definition}

%To relate a HL situation $s_{h}$ and a LL situation $s_{l}$, they introduce the notion of $m$-isomorphism relation, written $s_{h}\sim_{m}s_{l}$, as follows:
%$s_{h}\sim^{M_{h},M_{l}}_{m}s_{l}$
%Intuitively, for a HL situation $s_{h}$ and a LL situation $s_{l}$, if $s_{h}$ is $m$-isomorphic to $s_{l}$, then $s_{h}$ interprets HL symbols, iff $s_{l}$ interprets their refinements.
Given a refinement mapping $m$, they introduced an isomorphism relation, called $m\text{-}isomorphic$, between a HL and a LL situation as follows:
\begin{definition}\rm\label{df-homomorphic}
Given a refinement mapping $m$, a situation $s_{h}$ of a HL model $M_{h}$ is $m$-isomorphic to a situation $s_{l}$ in a LL model $M_{l}$, written $s_{h}\sim_{m}s_{l}$, if: for any HL relational fluent $P$, and variable assignment $v$, we have
      $M_{h}, v[s/s_{h}] \models P(\vec{x},s)$ iff $M_{l}, v[ s/s_{l}] \models m(P)(\vec{x}, s)$;
for any HL functional fluent $f$, variable assignment $v$, we have
      $M_{h}, v[s/s_{h}] \models f(\vec{x},s)=y$ iff $M_{l}, v[s/s_{l}] \models m(f)(\vec{x},s)=y$.
\end{definition}

To relate HL and LL models, they defined two relations: $m$-simulation and $m$-back-simulation. Intuitively, simulation means: whenever a refinement of a HL action can occur, so can the HL action, and back-simulation means the other direction. Here we only present the definition of $m$-simulation relation, $m$-back-simulation relation can be defined symmetrically. In the following, $\Delta^{M}_S$ denotes the situation domain of $M$; $S_0^{M}$ stands for the initial situation of $M$; the notation $\textit{Term}(\delta,s,C)$ means starting in situation $s$, program $\delta$ terminates under constraint $C$:
\begin{equation*}
	\setlength{\abovedisplayskip}{1mm}
	\setlength{\belowdisplayskip}{1mm}
    \hspace{0.35em}\textit{Term}(\delta,s,C)\doteq \neg \exists h.C(h)\land \forall s' \sqsubset h\exists \delta'.Trans^{*}(\delta, s, \delta', s').
\end{equation*}

%the abbreviation $\mbox{Term}(\delta,s,C)$ means starting in situation $s$, program $\delta$ terminates under constraint $C$.

\begin{definition}\rm
 A relation $B\subseteq \Delta^{M_{h}}_{S}\times \Delta^{M_{l}}_{S}$\amendb{ is an} $m$-simulation relation between $M_{h}$ and $M_{l}$, if $\langle S^{M_{h}}_{0}, S^{M_{l}}_{0} \rangle \in B$ and the following hold: (1) $\langle s_{h}, s_{l}\rangle \in B$ implies that: $s_{h}\sim_{m}s_{l}$; for any HL action type $A$, and variable assignment $v$, $M_{l},v[s/s_{l}]\models \mbox{Term}(m(A(\vec{x})),s,C_l)$, and if there is a situation $s'_{l}$ s.t. $M_{l},v[s/s_{l},s'/s'_{l}]\models Do(m(A(\vec{x})),s,s')$, then there is a situation $s'_{h}$ s.t. $M_{h},v[s/s_{h},s'/s'_{h}]\models  Do(A(\vec{x}),s,s')$ and $\langle s'_{h},s'_{l}\rangle \in B$. (2) For any infinite HL action sequence $\sigma$, if there is an infinite history in $M_l$ generated by $m(\sigma)$ and satisfying $C_l$, then there is an infinite history in $M_h$ generated by $\sigma$ and satisfying $C_h$.
\end{definition}

%\begin{definition}\rm ($m$-back-simulation)
%A relation $B\subseteq \Delta^{M_{h}}_{S}\times \Delta^{M_{l}}_{S}$ is an $m$-back-simulation relation between $M_{h}$ and $M_{l}$, if $\langle S^{M_{h}}_{0}, S^{M_{l}}_{0} \rangle \in B$, and the following hold:
%\begin{enumerate}
%\item $\langle s_{h}, s_{l}\rangle \in B$ implies that: $s_{h}\sim^{M_{h},M_{l}}_{m}s_{l}$;
% for any HL action $A$, and variable assignment $v$,
% $M_{l},v[s/s_{l}]\models \mbox{Term}(m(A(\vec{x})),s,C_l)$, and
% if there is a situation $s'_{h}$ s.t. $M_{h},v[s/s_{h},s'/s'_{h}]\models  Do(A(\vec{x}),s,s')$, then
% there is a situation $s'_{l}$ s.t. $M_{l},v[s/s_{l},s'/s'_{l}]\models Do(m(A(\vec{x})),s,s')$ and $\langle s'_{h},s'_{l}\rangle \in B$;
%\item For any infinite HL action sequence $\sigma$, if there is an infinite history in $M_h$ generated by $\sigma$ and satisfying $C_h$, then
%    there is an infinite history in $M_l$ generated by $m(\sigma)$ and satisfying $C_l$.
%\end{enumerate}
%\end{definition}

Based on the notions above, they defined sound/complete abstraction on model and theory level. For these two levels, sound abstractions mean that HL behavior entails LL behavior, complete abstractions mean the other direction.

\begin{definition}\rm \label{model-level-sound-abstraction}
$M_{h}$ is a {\em sound $m$-abstraction} of $M_{l}$, if
there is an $m$-back-simulation relation $B$ between $M_{h}$ and $M_{l}$.
\end{definition}

\begin{definition} \rm \label{model-level-complete-abstraction}
$M_{h}$ is a {\em complete $m$-abstraction} of $M_{l}$,  if there is a $m$-simulation relation $B$ between $M_{h}$ and $M_{l}$.
\end{definition}

On the theory level, for sound abstraction, they defined two versions (complete abstractions can be defined symmetrically, and we omit here):
\begin{definition} \rm \label{weak-sound}
 $\mathcal{G}_{h}$ is a weak sound $m$-abstraction of $\mathcal{G}_{l}$, if for any model $M_{l}$ of $\mathcal{D}_{l}$, there is a model $M_{h}$ of $\mathcal{D}_{h}$ such that: (1) $M_{h}$ is a sound $m$-abstraction of $M_{l}$ via $B$; (2) for any situations \amendc{$s_h$ in $M_{h}$} and $s_l$ in $M_{l}$, if $\langle s_{h}, s_{l}\rangle \in B$  and $M_h,v[s_h/s] \models G_h[s]$, then  $M_l,v[s_l/s] \models G_l[s]$.
\end{definition}

\begin{definition} \rm \label{soundabs_TL}
 $\mathcal{G}_{h}$ is a sound $m$-abstraction of $\mathcal{G}_{l}$, if it is a weak sound $m$-abstraction of $\mathcal{G}_{l}$, and
  for any model $M_{l}$ of $\mathcal{D}_{l}$, there is a model $M_{h}$ of $\mathcal{D}_{h}$ such that: (1) $M_{h}$ is a complete $m$-abstraction of $M_{l}$ via $B$; (2) for any situations \amendc{$s_h$ in $M_{h}$} and $s_l$ in $M_{l}$, if $\langle s_{h}, s_{l}\rangle \in B$  and $M_h,v[s_h/s] \models G_h[s]$, then  $M_l,v[s_l/s] \models G_l[s]$.
\end{definition}

\section{Proof-Theoretic Characterization}

In this section, we give a proof-theoretic characterization for sound abstractions for generalized planning (g-planning).

First of all, we introduce some notations and conventions. We define the program of doing any HL action sequence
and its refinement as follows:
\begin{equation*}
	\setlength{\abovedisplayskip}{1mm}
	\setlength{\belowdisplayskip}{1mm}
    anyhlas\doteq (|_{A\in \mathcal{A}_{h} }\ \pi \vec{x}.A(\vec{x}))^{*}, \thinspace anyllps\doteq m(anyhlas).
\end{equation*}
We call a situation $s$ s.t. $Do(anyllps, S_0, s)$ holds an executable refinement of a HL situation.

The notation $\textit{Infexe}(\delta, h, C)$ means $h$ is an infinite execution of program $\delta$ satisfying trajectory constraint $C$:
\begin{equation*}
	\setlength{\abovedisplayskip}{1mm}
	\setlength{\belowdisplayskip}{1mm}
    \textit{Infexe}(\delta,h,C)\doteq C(h)\land \forall s' \sqsubset h\exists \delta'.\ Trans^{*}(\delta, S_0, \delta', s').
\end{equation*}

% The notation $\textit{Term}(\delta,s,C)$ means starting in situation $s$, program $\delta$ terminates under constraint $C$:
% \begin{equation*}
% 	\setlength{\abovedisplayskip}{1mm}
% 	\setlength{\belowdisplayskip}{1mm}
%     \hspace{0.35em}\textit{Term}(\delta,s,C)\doteq \neg \exists h.C(h)\land \forall s' \sqsubset h\exists \delta'.Trans^{*}(\delta, s, \delta', s').
% \end{equation*}

We introduce an abbreviation $R(s,s')$, which means that situations $s$ and $s'$ result from executing the refinement of the same HL action sequence:
\begingroup
\addtolength{\jot}{-1mm}
\begin{flalign*}
	\setlength{\abovedisplayskip}{1mm}
	\setlength{\belowdisplayskip}{1mm}
    &\hspace{0.35em} R(s,s')\doteq \forall P. P(S_{0},S_{0})\land \\
	&\hspace*{1.35em} \mathsmaller{\bigwedge}_{A\in \mathcal{A}_{h}} \forall \vec{x},s,s_{1},s',s'_{1}. (P(s,s')\land Do(m(A(\vec{x})),s,s_{1}) \\
	&\hspace*{4.35em}\land Do(m(A(\vec{x})),s',s'_{1})\supset P(s_{1},s'_{1}))\supset P(s,s').
\end{flalign*}
\endgroup

Let $\phi$ be a HL formula uniform in a situation. We use $m(\phi)$ to denote the formula resulting from replacing each high-level symbol in $\phi$ with its LL definitions. We now define $m(C)$ for a HL trajectory constraint $C$. For this purpose, we first define a normal form for trajectory constraints.

\begin{definition} \rm
Let $C$ be a trajectory constraint. We say that $C$ is in normal form if $C$ contains no occurrence of action variables or $Poss$, and any appearance of $do$ must be in the form of $s'=do(A(\vec{t}),s)$, where $s$ and $s'$ are variables.
\end{definition}

It is easy to prove that any trajectory constraint can be converted to an equivalent one in normal form.

\begin{definition} \rm \label{refinement_of_TC}
Let $C$ be a HL trajectory constraint. We first convert it into an equivalent one $C'$ in normal form. We let $m(C)$ denote the LL constraint obtained from $C'$ as follows: first replace any appearance of $\exists s$ with $\exists s. Do(anyllps, S_0, s)$, then replace any appearance of $s'=do(A(\vec{t}),s)$ with $Do(m(A(\vec{t})), s, s')$, and finally replace any high-level symbols with its LL definitions.
\end{definition}

We now extend the $m$-simulation or $m$-back-simulation relation $B$ to infinite histories, and show that if a HL infinite history $h_h$ and a LL infinite history $h_l$ are $B$-related, then $h_h$ satisfies a constraint $C$ iff $h_l$ satisfies $m(C)$.

%\begin{definition} \rm
%Let $M_h$ be a sound or complete abstraction of $M_l$ via $B$.
%Let $h_h$ and $h_l$ be infinite histories of $M_h$ and $M_l$, respectively. We write $\langle h_{h}, h_{l}\rangle \in B$ if for any $s_h \sqsubset h_h$ and any $s_l \sqsubset h_l$, there exist $s'_h \sqsubset h_h$ and $s'_l \sqsubset h_l$, such that $s_h \sqsubseteq s'_h$, $s_l \sqsubseteq s'_l$, and $\langle s'_{h}, s'_{l}\rangle \in B$.
%\end{definition}

\begin{definition} \rm
Let $M_h$ be a sound or complete abstraction of $M_l$ via $B$.
Let $h_h$ and $h_l$ be infinite histories of $M_h$ and $M_l$, respectively. We write $\langle h_{h}, h_{l}\rangle \in B$, if for any $s_h \sqsubset h_h$, there exist $s_l \sqsubset h_l$, such that $\langle s_{h}, s_{l}\rangle \in B$.
\end{definition}

By induction on the structure of a normal form trajectory constraint, we can prove:
%\begin{prop}
%Let $M_h$ be a sound or complete abstraction of $M_l$ via $B$. Let $\langle h_{h}, h_{l}\rangle \in B$. Let $C$ be a HL trajectory constraint. Then
% $M_{h}, v[h/h_{h}] \models C(h)$ iff $M_{l}, v[h/h_{l}] \models m(C)(h)$.
%\end{prop}
%

\begin{prop}\label{simulation between infinite histories}
Let $M_h$ be a sound and complete abstraction of $M_l$. Let $C$ be a HL trajectory constraint. Then
 $M_{h}\models C$ iff $M_{l}\models m(C)$.
\end{prop}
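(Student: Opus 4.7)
The plan is to proceed by structural induction on the normal form of $C$. Fix infinite histories $h_h$ of $M_h$ and $h_l$ of $M_l$ that are related in both the $m$-back-simulation relation supplied by soundness and the $m$-simulation relation supplied by completeness, and show inductively that for every normal-form subformula $\psi$ of $C$ and every variable assignment $v$ that sends situation variables to $B$-matched pairs, $M_h,v\models\psi$ iff $M_l,v\models m(\psi)$. Before starting the induction I would prove two preparatory lemmas. The first is a \emph{uniform-formula translation} lemma: if $\langle s_h,s_l\rangle\in B$, then for any HL situation-suppressed formula $\phi$, $M_h,v[s/s_h]\models\phi[s]$ iff $M_l,v[s/s_l]\models m(\phi)[s]$; this is a routine sub-induction on $\phi$ whose base case is precisely the $m$-isomorphism clause $s_h\sim_m s_l$ built into membership in $B$. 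The second is a \emph{situation correspondence} lemma: every HL situation of $M_h$ is $B$-related to some executable refinement of $M_l$, and conversely every executable refinement is $B$-related to some HL situation. The forward direction is obtained by iterating $m$-back-simulation along the HL action sequence generating $s_h$; the backward direction parses an $anyllps$-execution reaching $s_l$ into macro-steps $m(A_i(\vec{t}_i))$ and iterates $m$-simulation.

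The inductive cases then unfold cleanly. Atoms $P(\vec{x},s)$ and $f(\vec{x},s)=y$ are immediate from the uniform-formula lemma. For the atom $s'=do(A(\vec{t}),s)$, whose $m$-translation is $Do(m(A(\vec{t})),s,s')$, the simulation and back-simulation clauses give exactly the required equivalence: given $\langle s_h,s_l\rangle\in B$, an HL $A$-successor of $s_h$ exists iff a terminating refinement execution of $m(A(\vec{t}))$ from $s_l$ exists, and the resulting successor pair again lies in $B$. Atoms $s\sqsubset h$ are handled by the preceding definition extending $B$ to infinite histories, together with its symmetric counterpart---which is precisely why we need both sound and complete abstraction in the hypothesis, so that witnesses can be produced in both directions. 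Boolean connectives and object-variable quantifiers are straightforward from the induction hypothesis, since the variable assignment $v$ is shared. Situation quantifiers $\exists s.\psi$, translated to $\exists s.\,Do(anyllps,S_0,s)\land m(\psi)$, together with their universal duals, are handled by the situation correspondence lemma, which supplies the required matching between the two quantification domains.

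The main obstacle I expect is the situation correspondence lemma, specifically its backward direction: given an executable refinement $s_l$ satisfying $Do(anyllps,S_0,s_l)$, produce a matching HL situation $s_h$. Intuitively one takes a parse of the $anyllps$-execution as a concatenation of macro-refinements $m(A_1(\vec{t}_1));\ldots;m(A_n(\vec{t}_n))$ and lifts each segment to an HL action via $m$-simulation. The delicate point is coherence: the program $anyllps$ admits many possible parses, and one must argue that some parse is simultaneously compatible with a valid HL action sequence at every intermediate step, iteratively extending a partial matching. A secondary subtlety is that soundness and completeness supply a priori distinct relations $B_s$ (back-simulation) and $B_c$ (simulation); one clean resolution is to work throughout with a single relation that is both an $m$-simulation and an $m$-back-simulation, obtained by the standard bisimulation-style construction (e.g.\ the greatest such relation contained in $B_s\cap B_c$), so that the two directions of the biconditional can share the same inductive invariant.
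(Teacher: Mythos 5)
Your proposal is correct and follows essentially the same route as the paper's proof: a structural induction on the normal-form constraint, using the $m$-back-simulation from soundness for one direction and the $m$-simulation from completeness for the other, with the correspondence between HL situations and executable refinements of $anyllps$ carrying the quantifier and $do$-atom cases. The two subtleties you flag as obstacles (coherently parsing an $anyllps$-execution into macro-steps, and the fact that soundness and completeness supply a priori distinct relations) are genuine, but the paper's own proof simply glosses over them, so making them explicit via your two preparatory lemmas is, if anything, a more careful rendering of the same argument.
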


\begin{proof}
  We do the proof by using structural induction on $C$:

\vspace{2mm}
  \noindent \textbf{Induction base}:
  \begin{itemize}
    \item $C\doteq s=s'$: Let $s_{h}$ and $s'_{h}$ are two situations in $M_{h}$, and $C$ be the atom $s_{h}=s'_{h}$. On one hand, if we have $M_{h},v[s/s_{h},s'/s'_{h}]\models s=s'$, then there  exists a HL grounded action sequence $\alpha$, such that
          $M_{h}, v[s/s_{h},s'/s'_{h}]\models s=do(\alpha,S_{0})\land s'=do(\alpha,S_{0})$.  Based on the definition of sound abstraction on model level, we have that there exist two situations $s_{l}$ and $s'_{l}$ such that $M_{l}, v[s/s_{l},s/s'_{l}]\models Do(m(\alpha),S_{0},s)\land Do(m(\alpha),S_{0},s')$.  Given the Definition \ref{refinement_of_TC}, we can get $M_{l}\models m(s_{h}=s'_{h})$.

        On the other hand, if we have $M_{l}\models m(s_{h}=s'_{h})$, then for the HL action sequence $\alpha$, based on the Definition \ref{refinement_of_TC}, we have     $M_{l}, v[s/s_{l},s/s'_{l}]\models Do(m(\alpha),S_{0},s)\land Do(m(\alpha),S_{0},s')$.
        By using the definition of complete abstraction on model level, we have that there exist two situations $s_{h}$ and $s'_{h}$ such that $M_{h}, v[s/s_{h},s'/s'_{h}]\models s=do(\alpha,S_{0})\land s'=do(\alpha,S_{0})$. Then we can get $M_{h}\models s_{h}=s'_{h}$.

    \item $C\doteq s\sqsubseteq s'$:  Let $s_{h}$ and $s'_{h}$ are two situations in $M_{h}$, and $C$ be the atom $s_{h}\sqsubseteq s'_{h}$. On one hand, there exists two HL grounded action sequences $\alpha$ and $\alpha'$, such that $\alpha$ is a subsequence of $\alpha'$, and
          $M_{h},v[s/s_{h},s/s'_{h}] \models s=do(\alpha, S_{0})\land s'=do(\alpha',S_{0})\land s\sqsubseteq s'$.
        Based on the definition of sound abstraction on model level, we have that there exist two situations $s_{l}$ and $s'_{l}$ such that
          $M_{l}, v[s/s_{l},s/s'_{l}]\models Do(m(\alpha),S_{0},s)\land Do(m(\alpha'),S_{0},s')\land s\sqsubseteq s'$.
        Given the Definition \ref{refinement_of_TC}, we can get $M_{l}\models m(s_{h}\sqsubseteq s'_{h})$.

        On the other hand, if we have $M_{l}\models m(s_{h}\sqsubseteq s'_{h})$, then for the HL action sequences $\alpha$ and $\alpha'$, we can get
          $M_{l}, v[s/s_{l},s/s'_{l}]\models Do(m(\alpha),S_{0},s)\land Do(m(\alpha'),S_{0},s')\land s\sqsubseteq s'$.
         Based on the Definition \ref{refinement_of_TC}, we have
           $M_{l}, v[s/s_{l},s/s'_{l}]\models Do(m(\alpha),S_{0},s)\land Do(m(\alpha),S_{0},s')\land s\sqsubseteq s'$.
        By using the definition of complete abstraction on model level, we have that there exist two situations $s_{h}$ and $s'_{h}$ such that
          $M_{h}, v[s/s_{h},s'/s'_{h}]\models s=do(\alpha,S_{0})\land s'=do(\alpha,S_{0}) \land s\sqsubseteq s'$.
        Then we can get $M_{h}\models s_{h}\sqsubseteq s'_{h}$.

    \item $C\doteq s\sqsubset h$:    Let $s_{h}$ be a situation in $M_{h}$, $h_{h}$ be a infinite history in $M_{h}$, and $C$ be the atom $s_{h}\sqsubset h_{h}$. On one hand, there exists an infinite HL grounded action sequence $\sigma$ and a finite subsequence $\alpha$ of $\sigma$, such that
      $M_{h},v[s/s_{h},h/h_{h}] \models s=do(\alpha,S_{0})\land s\sqsubset h$.
    Using the action sequence $\sigma$, and based on the definition of sound abstraction on model level, we can easily construct an infinite history $h_{l}$ in $M_{l}$ corresponds to $h_{h}$, and there also exists a situation $s_{l}\sqsubset h_{l}$ such that
      $M_{l},v[s/s_{l},h/h_{l}] \models Do(m(\alpha),S_{0},s)\land s\sqsubset h$.
    Given the Definition \ref{refinement_of_TC}, we can get $M_{l}\models m(s_{h}\sqsubset h_{h})$.

    On the other hand, if we have $M_{l}\models m(s_{h}\sqsubset h_{h})$, then for the HL action sequences $\sigma$ and $\alpha$, based on the Definition \ref{refinement_of_TC}, we have
      $M_{l},v[s/s_{l},h/h_{l}] \models Do(m(\alpha),S_{0},s)\land s\sqsubset h$.
    By using the definition of complete abstraction on model level, we have that there exist a situation $s_{h}$ and an infinite history  $h_{h}$ generated by $\sigma$ such that
      $M_{h},v[s/s_{h},h/h_{h}] \models s=do(\alpha,S_{0})\land s\sqsubset h$.
    Then we can get $M_{h}\models s_{h}\sqsubset h_{h}$.

    \item $C\doteq a=a'$:    Let $a$ and $a'$ are two HL primitive deterministic actions, and $C$ be the atom $a=a'$. Then based on the definition of refinement mapping, we have that
      $M_{h} \models a=a' \Leftrightarrow M_{l}\models m(a)=m(a')\Leftrightarrow M_{l}\models m(a = a')$.

    \item $C\doteq P(\vec{x},s)$:    Let $s_{h}$ is a situation in $M_{h}$, $P$ is a HL relational fluent, and $C$ be the atom $P(\vec{x},s)$. Then based on the definition of $m$-isomorphic, we have that there exists a situation $s_{l}$ in $M_{l}$ such that
      $M_{h},v[s/s_{h}] \models P(\vec{x},s) \Leftrightarrow  M_{l},v[s/s_{l}]\models m(P)(\vec{x},s)$.

    \item $C\doteq f(\vec{x},s)=y$:    Let $s_{h}$ is a situation in $M_{h}$, $f$ is a HL relational fluent, and $C$ be the atom $f(\vec{x},s)=y$. Then based on the definitions of simulation relations and $m$-isomorphic, we have that there exists a situation $s_{l}$ in $M_{l}$ such that
      $M_{h},v[s/s_{h}] \models f(\vec{x},s)=y \Leftrightarrow  M_{l},v[s/s_{l}]\models m(f)(\vec{x}, s)=y$.
  \end{itemize}

\vspace{2mm}
  \noindent \textbf{Induction step}: Let $C_{1}$ and $C_{2}$ are two HL trajectory constraints, and suppose that
  \begin{center}
    $M_{h}\models C_{1}$ iff $M_{l}\models m(C_{1})$,
    $M_{h}\models C_{2}$ iff $M_{l}\models m(C_{2})$.
  \end{center}
  Then we prove the following three cases:

  \begin{itemize}
    \item $C\doteq \neg C_{1}$:    If we have $M_{h}\models \neg C_{1}$, then we can get $M_{h}\nvDash C_{1}$. Based on the inductive assumption, we can get $M_{l}\nvDash m(C_{1})$, which means $M_{l}\models \neg m( C_{1})$. Thus, we have $M_{l}\models m(\neg C_{1})$.

    \item $C\doteq C_{1}\land C_{2}$:    If we have $M_{h}\models C_{1}\land C_{2}$, then we can get $M_{h}\models C_{1}$ and $M_{h}\models C_{2}$. Based on the inductive assumption, we can get $M_{l}\models m(C_{1})$ and $M_{l}\models m(C_{2})$, which means $M_{l}\models m(C_{1})\land m(C_{2})$. Thus, we have $M_{l}\models m(C_{1}\land C_{2})$.
    \item $C\doteq \exists s. C_{1}(s)$:    If we have $M_{h}\models \exists s. C_{1}(s)$, then there exists a situation $s_{h}$ in $M_{h}$ reached from the initial situation $S_{0}$ via executing an action sequence $\alpha$, such that $M_{h}, v[s/s_{h}]\models C_{1}(s)$. Based on the definition of sound abstraction on model level and inductive assumption, we have that there exists a situation $s_{l}$ in $M_{l}$ such that $M_{l},v[s/s_{l}]\models Do(m(\alpha),S_{0},s)\land m(C_{1})(s)$, which means $M_{l}\models m(\exists s. C_{1}(s))$.
  \end{itemize}

\end{proof}

%?? please use spaces to let the formulas look nicer

Non-deterministic actions in \cite{Cui21} are treated as non-deterministic programs. In particular, each non-deterministic action $A$ has a definition in the form $A(\vec{x})\doteq \pi \vec{u}. A_d(\vec{x},\vec{u})$, where $A_d$ is a deterministic action.  %Recall that for each deterministic action $A_d$, its action precondition axiom has form $Poss(A(\vec{x},\vec{u}),s)\equiv \Pi_A(\vec{x},\vec{u},s)$; for each relational fluent $P$, its SSA has form $P(\vec{y},do(a,s))\equiv \phi_{P}(\vec{y},a,s)$; and  for each functional fluent $f$, its SSA has form  $f(\vec{y},do(a,s))=z\equiv \psi_{f}(\vec{y},z,a,s)$.
We let $\Pi_A(\vec{x},s)$ denote $\exists \vec{u}. \Pi_{A_d}(\vec{x},\vec{u},s)$, let
$\phi_{P,A_d}(\vec{y},\vec{x},\vec{u}, s)$ denote $\phi_P(\vec{y},A_d(\vec{x},\vec{u}),s)$ simplified by using $\mathcal{D}_{una}$, and let
$\psi_{f,A_{d}}(\vec{y},z,\vec{x},\vec{u},s))$  denote $\psi_f(\vec{y},z,A_d(\vec{x},\vec{u}),s)$ simplified by using $\mathcal{D}_{una}$.

We now introduce the following abbreviations:

\vspace*{1mm}
\noindent$\psi_T \doteq \bigwedge_{A\in \mathcal{A}_{h}}\forall \vec{x}. \textit{Term}(m(A(\vec{x})),s,C_l)$,

\vspace*{1mm}
\noindent$\xi_P \doteq \bigwedge_{A\in \mathcal{A}_{h}}\forall \vec{x},s'. Do(m(A(\vec{x})),s,s') \supset \exists \vec{u}.$\\
\hspace*{1em}$\bigwedge_{P\in \mathcal{P}_{h}}[\forall \vec{y}. m(P(\vec{y},s'))\equiv m(\phi_{P,A_{d}}(\vec{y},\vec{x},\vec{u}, s))]$,

\vspace*{1mm}
\noindent$\xi_f \doteq \bigwedge_{A\in \mathcal{A}_{h}}\forall \vec{x},s'. Do(m(A(\vec{x})),s,s') \supset \exists \vec{u}.$\\
\hspace*{1em}$\bigwedge_{f\in \mathcal{F}_{h}}[\forall \vec{y}, z. m(f(\vec{y},s')=z)\equiv m(\psi_{f,A_{d}}(\vec{y},z,\vec{x},\vec{u},s))]$,

\vspace*{1mm}
\noindent where, $\psi_T$ says that the refinement of any HL action terminates in $s$ under $C_l$; $\xi_P$ and $\xi_f$ says that for any HL action $A(\vec{x})$, if its refinement transforms situation $s$ to $s'$, then there is $\vec{u}$ s.t. the mapping of all SSAs instantiated with $A_d(\vec{x},\vec{u})$ hold for $s$ and $s'$.

The following theorem gives a proof-theoretic characterization for sound abstractions, where Item 1 and Item 6 are easy to understand; Item 2 says that $\mathcal{D}_{l}$ entails that for any executable refinement of a HL situation,  the executability of the refinement of any HL action implies its mapped precondition; Item 3 says that $\mathcal{D}_{l}$ entails  that for any executable refinement of a HL situation, the mapped precondition of any HL action implies that the executability of its refinement holds in some $R$-related situation; and Item 5 says that $\mathcal{D}_{l}$ entails  that the existence of an inifinite execution of $anyllps$ satisfing the LL constraint is equivalent to the existence of one satisfying the mapped HL constraint.

\begin{theorem}(Sound abstraction) \label{s-abs-nd-case}
  Given a generalized planning problem $\mathcal{G}_{l}$ and its abstraction $\mathcal{G}_{h}$, $\mathcal{G}_{h}$ is a sound $m$-abstraction of $\mathcal{G}_{l}$ iff the following hold:
  \begin{enumerate}
  	\item $\mathcal{D}_{l}^{S_{0}}\models m(\phi)$, where $\phi\in \mathcal{D}_{h}^{S_{0}}$;
	\item $\mathcal{D}_{l}\models \forall s. Do(anyllps,S_{0},s)\supset \\
	\hspace*{1em}\bigwedge_{A\in \mathcal{A}_{h}}\forall \vec{x},s'. Do(m(A(\vec{x})),s,s')\supset m(\Pi_A(\vec{x},s))$;

	\item $\mathcal{D}_{l}\models \forall s. Do(anyllps,S_{0},s)\supset \\
	\hspace*{1em}\bigwedge_{A\in \mathcal{A}_{h}}\forall \vec{x}. m(\Pi_A(\vec{x},s))\supset\\
	\hspace*{2em} \exists s',s''.R(s,s')\land Do(m(A(\vec{x})),s',s'')$;

    \item $\mathcal{D}^{l}\models \forall s. Do(anyllps,S_{0},s)\supset \\
    \hspace*{1em}\bigwedge_{A\in \mathcal{A}_{h}}\forall \vec{x},s'. Do(m(A(\vec{x})),s,s') \supset \psi_T$; % \bigwedge_{A\in \mathcal{A}_{h}}\forall \vec{x}. \textit{Term}(m(A(\vec{x})),s,C_l)$;

    \item $\mathcal{D}_{l}\models \forall s. Do(anyllps,S_{0},s)\supset \xi_P$; % \bigwedge_{A\in \mathcal{A}_{h}}\forall \vec{x},s'. Do(m(A(\vec{x})),s,s') $\\ \hspace*{2.5em}$ \supset\exists\vec{u}.\bigwedge_{P\in \mathcal{P}_{h}}[\forall \vec{y}. m(P(\vec{y},s'))\equiv m(\phi_{P,A_{d}}(\vec{y},\vec{x},\vec{u}, s))]$;

    \item $\mathcal{D}_{l}\models \forall s. Do(anyllps,S_{0},s)\supset \xi_f$; %\bigwedge_{A\in \mathcal{A}_{h}}\forall \vec{x},s'. Do(m(A(\vec{x})),s,s')$\\\hspace*{2.5em}$\supset \exists \vec{u}.\bigwedge_{f\in \mathcal{F}_{h}}[\forall \vec{y}, z. m(f(\vec{y},s')=z)\equiv m(\psi_{f,A_{d}}(\vec{y},z,\vec{x},\vec{u},s))]$,

	\item $\mathcal{D}_{l}\models \exists h_{l}.\textit{Infexe}(anyllps,h_{l},C_{l})\\
	\hspace*{1em}\equiv \exists h_{l}. \textit{Infexe}(anyllps,h_{l},m(C_{h}))$;

	\item $\mathcal{D}_{l}\models \forall s. Do(anyllps,S_{0},s)\land m(G_{h})[s]\supset G_{l}[s]$.
  \end{enumerate}
%  where, $poss(A(\vec{x}),s)\equiv \mathcal{R}^{E}[\top(s), A(\vec{x})]$, and $\phi^{*}_{P,A}(\vec{x},\vec{y},s) \equiv \mathcal{R}^{U}[P(\vec{y},s'), A(\vec{x})]$, and $\psi^{*}_{f,A}(\vec{x},\vec{y},s) \equiv \mathcal{R}^{U}[ f(\vec{y},s')=z, A(\vec{x})]$.
\end{theorem}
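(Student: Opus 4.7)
The plan is to prove both directions separately: the $(\Rightarrow)$ direction by fixing an arbitrary $M_l \models \mathcal{D}_l$, invoking Definition \ref{soundabs_TL} to extract the required $M_h$ together with its back-simulation and simulation relations, and then reading off each item from the simulation axioms; and the $(\Leftarrow)$ direction by explicit construction of $M_h$ from $M_l$, using HL action sequences whose refinements are executable in $M_l$ as proxies for HL situations.

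For the forward direction, I would first use the sound abstraction clause of Definition \ref{soundabs_TL} to obtain $M_h$ and $B$, prove by induction on the length of refinement that every executable refinement $s$ has some $s_h$ with $\langle s_h, s\rangle \in B$, and then derive items 1, 2 and 8 from $m$-isomorphism, back-simulation, and goal preservation respectively. Items 3--6 additionally use the complete abstraction clause: item 3 converts a HL action's translated precondition into an actual LL refinement transition via $m$-simulation (the $R$-related situation witnesses that the transition starts from an $m$-isomorphic partner); item 4 extracts termination directly from the simulation clause; items 5, 6 combine back-simulation, simulation, and $m$-isomorphism to witness the non-deterministic parameter $\vec{u}$ of $A_d$ that matches the observed post-state. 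Item 7 is then obtained by applying Proposition \ref{simulation between infinite histories} specialized to $C_h$ and $C_l$ and unfolding the definition of $m(C_h)$.

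For the backward direction, I would construct $M_h$ by taking the object domain from $M_l$; letting the HL situation domain consist of equivalence classes $[\sigma]$ of HL ground action sequences $\sigma$ such that $m(\sigma)$ is executable in $M_l$; interpreting each HL fluent at $[\sigma]$ by choosing any $M_l$-terminal $s_l$ of $m(\sigma)$ and reading $m(P), m(f)$ off $s_l$; and setting $B = \{\langle [\sigma], s_l\rangle : M_l \models Do(m(\sigma), S_0, s_l)\}$. I would then verify that $M_h$ satisfies $\mathcal{D}_h^{S_0}$ from item 1, the HL SSAs from items 5--6 (using the witnessing $\vec{u}$ for each HL transition to match $\phi_{P,A_d}$ and $\psi_{f,A_d}$), the HL precondition axioms from items 2 (necessity) and 3 (sufficiency, with $R$ supplying an $m$-isomorphic witness); that $B$ is simultaneously an $m$-back-simulation and $m$-simulation via items 2--6, the infinite-history clauses of both from item 7, and finally goal preservation from item 8.

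The main obstacle will be well-definedness of $M_h$ and verification of the HL SSAs in the presence of non-deterministic actions: for the same HL action $A(\vec{x})$ executed from $[\sigma]$, different LL terminals of $m(\sigma ; A(\vec{x}))$ may arise from different $\vec{u}$ witnessing $\pi \vec{u}.A_d(\vec{x},\vec{u})$, so the $\sim_m$ equivalence class of the resulting HL situation must be shown independent of that choice, and the HL SSA instance $\phi_P(\vec{y}, A_d(\vec{x},\vec{u}), s)$ must match on the nose. Items 5 and 6 are tailored precisely to supply a consistent $\vec{u}$ for each post-state, while item 3's use of $R$ guarantees that the HL transition exists in $M_h$ from the correct equivalence class. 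Handling this bookkeeping, together with the inductive maintenance of $\sim_m$ along refinements and the transfer of trajectory constraints via Proposition \ref{simulation between infinite histories}, is where essentially all of the technical difficulty concentrates.
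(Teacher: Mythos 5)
Your forward direction is essentially the paper's argument: fix $M_l\models\mathcal{D}_l$, extract $M_h$ with its back-simulation $B_2$ and simulation $B_1$ from Definition~\ref{soundabs_TL}, and read items 1--8 off $m$-isomorphism, the two simulation clauses, and Proposition~\ref{simulation between infinite histories}; nothing to object to there.

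The backward direction is where you diverge from the paper, and your construction of $M_h$ has a genuine problem. You propose taking the HL situation domain to be equivalence classes $[\sigma]$ of HL ground action sequences whose refinement is executable in $M_l$, with fluents read off an arbitrary LL terminal of $m(\sigma)$. First, such a structure is not a model of $\mathcal{D}_h$: the foundational axioms $\Sigma$ force the situation sort to be the full tree generated by $do$ from $S_0$ (including non-executable situations, with $do$ injective), so you can neither restrict to executable-refinement sequences nor quotient them. Second, and more seriously, the well-definedness obstacle you flag does not go away: the theorem covers non-deterministic HL actions $A(\vec{x})\doteq\pi\vec{u}.A_d(\vec{x},\vec{u})$, and items 5--6 only guarantee that \emph{each} LL transition $Do(m(A(\vec{x})),s,s')$ is matched by \emph{some} $\vec{u}$; different LL terminals of $m(\sigma;A(\vec{x}))$ may correspond to different $\vec{u}$ and hence fail to be $m$-isomorphic to one another. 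So there is no single fluent interpretation you can attach to a class $[\sigma;A(\vec{x})]$, and "choosing any terminal" is not well defined. The paper avoids both problems by building $M_h$ as an ordinary tree model of $\mathcal{D}_h$ (fluents fixed at $S_0^{M_h}$ via $m$ and item 1, then propagated by the HL precondition and effect axioms) and keeping $B_1$, $B_2$ as genuine many-to-many \emph{relations} between HL and LL situations, inserting all candidate pairs after each action and deleting those whose HL post-state disagrees with the $\vec{u}$ witnessed by items 5--6. If you replace your quotient by the full HL situation tree and a relational $B$ with that add-then-prune bookkeeping (using item 3's $R$-related witness for back-simulation and item 7 for the infinite-history clause), the rest of your outline goes through.
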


\begin{proof}
   Firstly, we prove the \textbf{only-if direction}:

\begin{enumerate}
  \item[1.] For any $M^{S_0}_{l}\models \mathcal{D}^{S_{0}}_{l}$, we can extend it to a model $M_{l}$ of $\mathcal{D}_{l}$. Then there exists a model $M_{h}$ of $\mathcal{D}_{h}$, which is a sound $m$-abstraction of $M_{l}$, and we have that the initial situation $S^{M_{h}}_{0}$ of $M_{h}$ is $m$-isomorphic to the initial situation $S^{M_{l}}_{0}$ of $M_{l}$. Given $\phi\in \mathcal{D}^{S_{0}}_{h}$, since $S^{M_{h}}_{0}$ satisfies $\phi$, so $S^{M_{l}}_{0}$ satisfies $m(\phi)$. Therefore, $\mathcal{D}^{S_{0}}_{l}\models m(\phi)$ for $\phi\in \mathcal{D}^{S_{0}}_{h}$;

\end{enumerate}

\begin{enumerate}
  \item[2.] Let $M_{l}$ be a model of $\mathcal{D}_{l}$, then there exists a model $M_{h}$ of $\mathcal{D}_{h}$, which is a complete $m$-abstraction of $M_{l}$ via a $m$-simulation relation $B_{1}$. Let $s_{l}$ be a situation of $M_{l}$ which satisfies $Do(anyllps,  S_{0},s)$, then based on the definition of $m$-simulation relation, we have that there is a situation $s_{h}$ of $M_{h}$ such that $\langle s_{h},s_{l} \rangle \in B_{1}$. Thus, for any HL action $A(\vec{x})$, $s_{h}$ satisfies $\forall \vec{x}.\Pi_{A}(A(\vec{x}),s)$ iff $s_{l}$ satisfied $\forall \vec{x}. m(\Pi_{A}(A(\vec{x}),s))$. Furthermore, if $M_{l},v[s/s_{l}]\models \forall \vec{x},\exists s'_{l}. Do(m(A(\vec{x})),s,s'_{l})$, then we have $M_{h},v[s/s_{h}]\models \forall \vec{x},\exists s'_{h}. Do(A(\vec{x}),s,s'_{h})$, which implies $M_{h},v[s/s_{h}]\models \forall \vec{x}. \Pi_{A}(A(\vec{x}),s)$. Therefore, $M_{l},v[s/s_{l}]\models \forall \vec{x}. m(\Pi_A(\vec{x},s))$.
\end{enumerate}

\begin{enumerate}
  \item[3.] Let $M_{l}$ be a model of $D_{l}$, then there exists a model $M_{h}$ of $D_{h}$ such that $M_{h}$ is a complete abstraction for $M_{l}$ via a $m$-simulation relation $B_{1}$, and sound abstraction for $M_{l}$ via a $m$-back-simulation relation $B_{2}$. Let $s_{l}$ be a situation of $M_{l}$ which satisfies $Do(anyllps, S_{0},s)$, then based on the definitions of $m$-simulation and $m$-back-simulation relation, we have that there is a situation $s_{h}$ of $M_{h}$ such that $\langle s_{h},s_{l} \rangle \in B_{1}$, and a situation $s'_{l}$ of $M_{l}$ such that $\langle s_{h},s'_{l} \rangle \in B_{2}$. Thus, $s_{l}$ and $s'_{l}$ are $R$-related. Furthermore, for any HL action $A(\vec{x})$, if $M_{l},v[s/s_{l}]\models \forall \vec{x}. m(\Pi_{A}(A(\vec{x}),s))$, then $M_{h},v[s/s_{h}]\models \forall \vec{x}. \Pi_{A}(A(\vec{x}),s)$, and hence $M_{l},v[s/s'_{l}]\models \forall \vec{x}. m(\Pi_{A}(A(\vec{x}),s))$. Therefore, we have $M_{l},v[s/s_{l}]\models \forall \vec{x}, \exists s', s''.R(s,s') \land Do(m(A(\vec{x})),s',s'')$.

\end{enumerate}

\begin{enumerate}
  \item[4.] Let $M_{l}$ be a model of $D_{l}$, then there exists a model $M_{h}$ of $D_{h}$ such that $M_{h}$ is a complete abstraction for $M_{l}$ via a $m$-simulation relation $B_{1}$. Let $s_{l}$ be a situation of $M_{l}$ which satisfies $Do(anyllps, S_{0},s)$, then  base on the definition of $m$-simulation relation, we have that there is a situation $s_{h}$ of $M_{h}$ such that $\langle s_{h},s_{l} \rangle \in B_{1}$, and $M_{l},v[s/s_{l}]\models \forall \vec{x}. Term(A(\vec{x}),s,C_{l})$ for any high HL action $A(\vec{x})$.

\end{enumerate}

\begin{enumerate}
  \item[5.] Let $M_{l}$ be a model of $\mathcal{D}_{l}$, then there exists a model $M_{h}$ of $\mathcal{D}_{h}$, which is a complete $m$-abstraction of $M_{l}$ via a $m$-simulation relation $B_{1}$. Let $s_{l}$ be a situation of $M_{l}$ which satisfies $Do(anyllps, S_{0},s)$, then  base on the definition of $m$-simulation relation, we have that there is a situation $s_{h}$ of $M_{h}$, such that $\langle s_{h},s_{l} \rangle \in B_{1}$. For each HL action $A(\vec{x})$, and any situation $s'_{l}$, if $M_{l},v[s/s_{l}, s'/s'_{l}]\models \forall \vec{x}. Do(m(A(\vec{x})),s,s')$, then there exists an execution $A(\vec{x}, \vec{u})$ of $A(\vec{x})$, and a situation $s'_{h}$, such that $M_{h},v[s/s_{h},s'/s'_{h}]\models \forall \vec{x}, \exists \vec{u}. Do(A_{d}(\vec{x},\vec{u}),s,s')$, and $\langle s'_{h},s'_{l} \rangle \in B_{1}$. Then we have that $s_{l}$ satisfies $\forall \vec{x},\vec{y},\exists \vec{u}. m(\phi_{P,A_{d}}(\vec{x},\vec{y},\vec{u},s))$ iff $s_{h}$ satisfies $\forall \vec{x},\vec{y},\exists \vec{u}.\phi_{P,A_{d}}(\vec{x},\vec{y},\vec{u},s)$ iff $s'_{h}$ satisfies $\forall \vec{y}. P(\vec{y},s')$ iff $s'_{l}$ satisfies $\forall \vec{y}. m(P(\vec{y},s'))$.
\end{enumerate}

\begin{enumerate}
  \item[6.] The proof of item 6 is similar to item 5.
\end{enumerate}

\begin{enumerate}
  \item[7.] Let $M_{l}$ be a model of $\mathcal{D}_{l}$, then there exists a model $M_{h}$ of $D_{h}$ such that $M_{h}$ is a complete abstraction for $M_{l}$ via a $m$-simulation relation $B_{1}$, and sound abstraction for $M_{l}$ via a $m$-back-simulation relation $B_{2}$.
      \begin{itemize}
        \item[(i)] On one hand, Let $h_{l}$ be a LL infinite history, for an infinite ground HL action sequence
              $A_{1}, A_{2}, \cdots, A_{i},\cdots$,
            and let $\sigma \doteq A_{1}, A_{2}, \cdots, A_{i},$
            we consider the infinite LL situation sequence
              $S^{M_{l}}_{0}, s^{1}_{l}, s^{2}_{l}, \cdots, s^{i}_{l}, \cdots$,
            where $s^{i}_{l}$ satisfies the following formula
              $s^{i}_{l}\sqsubset h_{l}\land Do(m(\sigma), S_{0}, s^{i}_{l})$. Based on the $m$-simulation relation $B_{1}$, we can easily construct an HL infinite history $h_{h}$ such that $\langle h_{h},h_{l} \rangle \in B_{1}$. Furthermore, we can get that if $h_{l}$ satisfies   $M_{l},v[h/h_{l}]\models Infexe(anyllps, h_{l}, C_{l})$, then $h_{h}$ satisfies $M_{h},v[h/h_{h}]\models Infexe(anyllps, h_{h}, C_{h})$. Then, based on the Proposition \ref{simulation between infinite histories}, we can get that $M_{l}, [h/h_{l}]\models Infexe(anyllps,h_{l},m(C_{h}))$.
        \item[(ii)] On the other hand, given an LL infinite history $h_{l}$, we can also construct an HL infinite history $h_{h}$ based on the $m$-simulation relation $B_{1}$ such that such that $\langle h_{h},h_{l} \rangle \in B_{1}$.     If $h_{l}$ satisfies $M_{l}\models \exists h_{l}. Infexe(angllps, h_{l}, m(C_{h}))$, then according to the proposition \ref{simulation between infinite histories}, we can get that $M_{h},v[h/h_{h}] \models Infexe(anyllps, h_{h}, C_{h})$. Given the definition of $m$-back-simulation relation, we then have that there exists a low-level infinite history $h_{l}$ which satisfies $M_{l}, v[h/h_{l}]\models Infexe(anyllps, h_{l}, C_{l})$.
      \end{itemize}
\end{enumerate}

\begin{enumerate}
  \item[8.] Let $M_{l}$ be a model of $\mathcal{D}_{l}$, then there exists a model $M_{h}$ of $\mathcal{D}_{h}$, which is a complete $m$-abstraction of $M_{l}$ via a $m$-simulation relation $B_{1}$. Let $s_{l}$ be a situation of $M_{l}$ which satisfies $Do(anyllps, S_{0},s)$, then base on the definition of $m$-simulation relation, we have that there is a situation $s_{h}$ of $M_{h}$, such that $\langle s_{h},s_{l} \rangle \in B_{1}$. Suppose $s_{l}$ satisfies $m(G_{h})$, then $s_{h}$ satisfies $G_{h}$, according to the definition of weak sound abstraction on theory level, we can get that $s_{l}$ satisfies $G_{l}$.
\end{enumerate}

Secondly, we prove the \textbf{if direction}: For any model $M_{l}$ of $\mathcal{D}_{l}$, we construct a model $M_{h}$ of $\mathcal{D}_{h}$ as follows: $M_{h}$ interprets the HL relational and functional fluents at initial situation $S^{M_{h}}_{0}$ according to the initial situation $S^{M_{l}}_{0}$ of $M_{l}$ and the refinement mapping. We complete $M_{h}$ by using the action precondition and effect axioms.

For $M_{l}$ and $M_{h}$, we firstly prove that $M_{h}$ is a complete abstraction of $M_{l}$: In the following, we construct a $m$-simulation relation $B_{1}$ between the situation and infinite history domains of $M_{h}$ and $M_{l}$,
    \begin{itemize}
     \item[1.]   $\langle S^{M_{h}}_{0}, S^{M_{l}}_{0}\rangle \in  B_{1}$,  i.e., $S^{M_{h}}_{0}$ is $m$-isomorphic to $S^{M_{l}}_{0}$;
     \item[2.] Let $\langle s_{h},s_{l}\rangle \in B_{1}$, where $s_{l}$ is a situation reached from  $S^{M_{l}}_{0}$ via executing $anyllps$, then $s_{h}$ is $m$-isomorphic to $s_{l}$. Let $A(\vec{x})$ be an arbitrary HL action. %By item 2, we have that if $s_{l}$ satisfies $poss(m(A(\vec{x})),s_{l})$, then $s_{h}$ satisfies $\Pi_A(\vec{x},s_{h})$.
         By item 4, if $m(A(\vec{x}))$ is executable in the situation $s_{l}$, then we have $M_{l},v[s/s_{l}]\models \forall \vec{x}. Term(m(A(\vec{x})),s,C_{l})$. By item 2, if $m(A(\vec{x}))$ is executable in $s_{l}$, then $A(\vec{x})$ is executable in $s_{h}$. For each situation $s'_{l}$ satisfies $Do(m(A(\vec{x})),s_{l},s'_{l})$ and all the situations $s'_{h}$ that satisfy $Do(A(\vec{x}),s_{h},s'_{h})$, we add all the pairs $\langle s'_{h}, s'_{l}\rangle$ in $B_{1}$.  By item 5 (the case of item 6 can be discussed similarly), for an execution $A_{d}(\vec{x},\vec{u})$ of $A(\vec{x})$, $s'_{l}$ satisfies $\forall \vec{y}. m(P(\vec{y},s'_{l}))$ iff $s_{l}$ satisfies  $\forall \vec{x},\vec{y},\exists \vec{u}. m(\Pi_{A_{d}}(\vec{x},\vec{y},\vec{u},s_{l}))$ iff $s_{h}$ satisfies $\forall \vec{x},\vec{y},\exists \vec{u}.\Pi_{A_{d}}(\vec{x},\vec{y},\vec{u},s_{h})$. For the situations $s'_{h}$ that do not satisfy $\forall \vec{y}. P(\vec{y},s'_{h})$, we then delete all the pairs $\langle s'_{h},s'_{l}\rangle$ from $B_{1}$.

         %Then we delete all the  have that $s'_{l}$ satisfies $m(P(\vec{y},s'_{l}))$ iff $s_{l}$ satisfies $m(\Pi_{A_{d}}(\vec{x},\vec{y},\vec{u},s_{l}))$ iff iff $s'_{h}$ satisfies $P(\vec{y},s'_{h})$. Thus, we have that $s'_{h}$ is $m$-isomorphic to $s'_{l}$, and we add $\langle s'_{h}, s'_{l} \rangle $ in $B$.
     \item[3.]  $B_{1}$ is a $m$-simulation relation follows its construction coupled with item 7.

   \end{itemize}

Furthermore, if $\langle s_{h}, s_{l}\rangle \in B_{1}$ and $s_{h}$ satisfies $G_{h}$, then $s_{l}$ satisfies $G_{l}$. Since $s_{h}$ and $s_{l}$ are $m$-isomorphic, $s_{l}$ satisfies $m(G_{h})$, by Item 8, $s_{l}$ satisfies $G_{l}$. Therefore, $M_{h}$ is a sound abstraction of $M_{l}$.
\vspace{2mm}

For $M_{l}$ and $M_{h}$, we then prove that $M_{h}$ is a sound abstraction of $M_{l}$: In the following, we construct a $m$-back-simulation relation $B_{2}$ between the situation and infinite history domains of $M_{h}$ and $M_{l}$,
   \begin{itemize}
     \item[1.] $\langle S^{M_{h}}_{0}, S^{M_{l}}_{0}\rangle \in  B_{2}$,  i.e., $S^{M_{h}}_{0}$ is $m$-isomorphic to $S^{M_{l}}_{0}$;
      \item[2.] Let $\langle s_{h},s_{l}\rangle \in B_{2}$, where $s_{l}$ is a situation reached from $S^{M_{l}}_{0}$ via executing $anyllps$. then $s_{h}$ is $m$-isomorphic to $s_{l}$. Let $A(\vec{x})$ be an arbitrary HL action. By item 4, if $m(A(\vec{x}))$ is executable in the situation $s_{l}$, then we have $M_{l},v[s/s_{l}]\models \forall \vec{x}. Term(m(A(\vec{x})),s,C_{l})$. By item 3, if $s_{l}$ satisfies $\forall \vec{x}. m(\Pi_{A}(\vec{x},s_{l}))$, then there exists a situation $s'_{l}$ $R$-related to $s_{l}$ satisfies $\forall \vec{x},\exists s''. Do(m(A(\vec{x})),s',s'')$. %We let $s'_{h}$, which satisfies $Do(A(\vec{x}),s_{h},s'_{h})$, $B_{2}$-related to any situation the situation $s''$.
          Then $s_{h}$ and $s'_{l}$ are $m$-isomorphic, and we replace $\langle s_{h},s_{l} \rangle$ in $B_{2}$ by $\langle s_{h},s'_{l}\rangle $. %Then $s'_{h}$ satisfies $P(\vec{y})$ iff $s_{h}$ satisfies $\phi_{P,A_{d}}(\vec{x},\vec{y},\vec{u})$ iff $s'_{l}$ satisfies $m(\phi_{P,A_{d}}(\vec{x},\vec{y},\vec{u}))$.
          In addition, for each situation $s'_{h}$ satisfies $\vec{x}. Do(A(\vec{x}),s_{h},s'_{h})$ and all the situations $s''_{l}$ that satisfy $\vec{x}. Do(m(A(\vec{x})),s_{l},s''_{l})$, we add all the pairs $\langle s'_{h},s''_{l}\rangle$ in $B_{2}$.
          By item 5 (the case of item 6 can be discussed similarly), for any execution $A_{d}(\vec{x},\vec{u})$ of $A(\vec{x})$, $s''_{l}$ satisfies $\forall \vec{y}. m(P(\vec{y},s''_{l}))$ iff $s_{l}$ satisfies $\forall \vec{x},\vec{y},\exists \vec{u}. m(\Pi_{A_{d}}(\vec{x},\vec{y},\vec{u},s_{l}))$ iff $s_{h}$ satisfies $\forall \vec{x},\vec{y},\exists \vec{u}.\Pi_{A_{d}}(\vec{x},\vec{y},\vec{u},s_{h})$. For the situations $s'_{h}$ that do not satisfy $\forall \vec{y}. P(\vec{y},s'_{h})$, we then delete all the pairs $\langle s'_{h},s''_{l}\rangle$ from $B_{2}$.

          %Then we have $s''_{l}$ satisfies $m(P(\vec{y},s''_{l}))$ iff $s_{l}$ satisfies $m(\Pi_{A_{d}}(\vec{x},\vec{y},\vec{u},s_{l}))$ iff $s'_{h}$ satisfies $P(\vec{y},s'_{h})$. Thus, we have that $s'_{h}$ is $m$-isomorphic to $s''_{l}$, and we add $\langle s'_{h}, s''_{l} \rangle $ in $B_{2}$.
      \item[3.] $B_{2}$ is a $m$-back-simulation relation follows its construction coupled with item 7.
   \end{itemize}

Furtherfore, if $\langle s_{h},s_{l}\rangle \in B_{2}$ and $s_{h}$ satisfies $G_{h}$, then $s_{l}$ satisfies $G_{l}$. Since $s_{h}$ and $s_{l}$ are $m$-isomorphic, $s_{l}$ satisfies $m(G_{h})$, by Item 8, $s_{l}$ satisfies $G_{l}$.

In conclusion, according to the definition of sound abstraction on theory level, we can get that $\mathcal{G}_{h}$ is a sound abstraction of $\mathcal{G}_{l}$.
\end{proof}

Unfortunately, we are not able to give a proof-theoretic characterization for complete abstractions. Nonetheless, we give the following characterization where
Item 2 says that $M_l^1$ satisfies for any executable refinement of a HL situation, $\psi_T$, $\xi_P$, and $\xi_f$ hold, and the LL goal implies the mapped HL goal, and
the executability of the refinement of any HL action implies its mapped precondition; Item 3 says that $M_l^2$ satisfies that there exists a set $P$ of situations including the initial situation such that for any $P$-situation,  $\psi_T$, $\xi_P$, and $\xi_f$ hold, and the LL goal implies the mapped HL goal, and
the mapped precondition of any HL action implies that its refinement is executable and leads to a $P$-situation.

\begin{theorem}
$\mathcal{G}_{h}$ is a complete $m$-abstraction of $\mathcal{G}_{l}$ iff
   for any model $M_{h}$ of $\mathcal{D}_{h}$, there exists a model $M^{1}_{l}$ of $\mathcal{D}_{l}$ such that:

  \begin{enumerate}
    \item $S^{M_{h}}_{0}\sim_{m} S^{M^{1}_{l}}_{0}$;
    \item $M^{1}_{l}\models \forall s. Do(anyllps,S_{0},s)\supset \\
    \hspace*{1.35em}\bigwedge_{A\in \mathcal{A}_{h}} \forall \vec{x},s'. Do(m(A(\vec{x})),s,s')\supset m(\Pi_A(\vec{x},s))$;
    \item $M^{1}_{l}\models \forall s. Do(anyllps,S_{0},s)\supset \\
    \hspace*{1.35em}\bigwedge_{A\in \mathcal{A}_{h}}\forall \vec{x},s'. Do(m(A(\vec{x})),s,s') \supset \\
    \hspace*{2.7em}\bigwedge_{A\in \mathcal{A}_{h}}\forall \vec{x}. \textit{Term}(m(A(\vec{x})),s,C_l)$;
    \item $M^{1}_{l}\models \forall s. Do(anyllps,S_{0},s)\supset \xi_P$; %\bigwedge_{A\in \mathcal{A}_{h}}\forall \vec{x},s'. Do(m(A(\vec{x})),s,s')$\\ \hspace*{3em}$\supset \exists \vec{u}.\bigwedge_{P\in \mathcal{P}_{h}}[\forall \vec{y}. m(P(\vec{y},s'))\equiv m(\phi_{P,A_{d}}(\vec{y},\vec{x},\vec{u}, s))]$;
    \item $M^{1}_{l}\models \forall s. Do(anyllps,S_{0},s)\supset \xi_f$;%\bigwedge_{A\in \mathcal{A}_{h}}\forall \vec{x},s'. Do(m(A(\vec{x})),s,s')$\\         \hspace*{3em}$\supset \exists \vec{u}.\bigwedge_{f\in \mathcal{F}_{h}}[\forall \vec{y}, z. m(f(\vec{y},s')=z)\equiv m(\psi_{f,A_{d}}(\vec{y},z,\vec{x},\vec{u},s))]$,
    \item $M^{1}_{l}\models \forall s. Do(anyllps,S_{0},s)\supset G_{l}[s]\supset m(G_{h})[s]$;
    \item if  $M^{1}_{l}\models \exists h_{l}.\textit{Infexe}(anyllps, h_{l}, C_{l})$, then $ M_h \models  \exists h_{h}.\textit{Infexe}(anyhlas, h_{h}, C_{h})$,
\end{enumerate}
and there is another model $M^{2}_{l}$ of $\mathcal{D}_{l}$ and a situation set $P$ of $M^{2}_{l}$ such that:
\begin{enumerate}
    \item[8] $S^{M_{h}}_{0}\sim_{m} S^{M^{2}_{l}}_{0}$;
    \item[9] $M^{2}_{l}\models P(S_0) \land  \forall s.P(s)\supset  \\
    \hspace*{1.35em}\bigwedge_{A\in \mathcal{A}_{h}}\forall \vec{x}.m(\Pi_A(\vec{x},s)) \supset \\
    \hspace*{2.7em}\exists s'. Do(m(A(\vec{x})),s,s')\wedge P(s')$;
    \item[10] $M^{2}_{l}\models P(S_0) \land  \forall s.P(s)\supset \\
    \hspace*{1.35em}\bigwedge_{A\in \mathcal{A}_{h}}\forall \vec{x},s'. Do(m(A(\vec{x})),s,s')\land P(s') \supset  \psi_T$; %\bigwedge_{A\in \mathcal{A}_{h}}\forall \vec{x}. \textit{Term}(m(A(\vec{x})),s,C_l)$;
    \item[11] $M^{2}_{l}\models P(S_0) \land  \forall s.P(s)\supset \xi_P$; %\bigwedge_{A\in \mathcal{A}_{h}}\forall \vec{x},s'. Do(m(A(\vec{x})),s,s') \land P(s')$\\ \hspace*{3em}$\supset \exists \vec{u}.\bigwedge_{P\in \mathcal{P}_{h}}[\forall \vec{y}. m(P(\vec{y},s'))\equiv m(\phi_{P,A_{d}}(\vec{y},\vec{x},\vec{u}, s))] $;
    \item[12] $M^{2}_{l}\models P(S_0) \land  \forall s.P(s)\supset \xi_f$; %\bigwedge_{A\in \mathcal{A}_{h}}\forall \vec{x},s'. Do(m(A(\vec{x})),s,s')  \land P(s') $\\        \hspace*{3em}$\supset \exists \vec{u}.\bigwedge_{f\in \mathcal{F}_{h}}[\forall \vec{y}, z. m(f(\vec{y},s')=z)\equiv m(\psi_{f,A_{d}}(\vec{y},z,\vec{x},\vec{u},s))]$,
    \item[13] $M^{2}_{l}\models P(S_0) \land  \forall s.P(s)\supset G_{l}[s] \supset m(G_{h})[s]$;
    \item[14] if  $ M_h \models  \exists h_{h}.\textit{Infexe}(anyhlas, h_{h}, C_{h})$, then $M^{2}_{l}\models \exists h_{l}.\textit{Infexe}(anyllps, h_{l}, C_{l})$.
  \end{enumerate}
\end{theorem}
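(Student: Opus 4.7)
My plan is to mirror the structure of the proof of Theorem~\ref{s-abs-nd-case} and split the argument into the two directions of the iff, leveraging the model-level notions of $m$-simulation and $m$-back-simulation. I would first observe that conditions 1--7 characterize the existence of a model $M^{1}_{l}$ in which $M_{h}$ is a complete $m$-abstraction of $M^{1}_{l}$ (via an $m$-simulation $B_{1}$) together with the required goal implication, while conditions 8--14 characterize $M^{2}_{l}$ together with a reachability set $P$ making $M_{h}$ a sound $m$-abstraction of $M^{2}_{l}$ (via an $m$-back-simulation $B_{2}$) with the goal implication. Two distinct low-level models are needed because, as in the sound case, $m$-sim and $m$-back-sim need not coexist on the same low-level model.

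For the only-if direction, I would fix an arbitrary $M_{h} \models \mathcal{D}_{h}$, invoke the theory-level definition of complete $m$-abstraction (symmetric to Definition~\ref{soundabs_TL}) to obtain $M^{1}_{l}$ with an $m$-sim $B_{1}$ and $M^{2}_{l}$ with an $m$-back-sim $B_{2}$, and then extract each of the 14 items in turn. Items 1 and 8 are immediate from the $m$-isomorphism of initial situations. Items 2--5 follow by applying the $m$-sim clause at each $anyllps$-reachable low-level situation, where the existential witness $\vec{u}$ inside $\xi_{P}$ and $\xi_{f}$ is supplied by the deterministic execution $A_{d}(\vec{x}, \vec{u})$ used on the high-level side. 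Item 6 is the direct goal clause, and item 7 lifts $m$-sim to infinite executions in the same manner as item 7 in the proof of Theorem~\ref{s-abs-nd-case}, using Proposition~\ref{simulation between infinite histories}. For items 9--14, I would take $P$ to be the image of $B_{2}$ on the situation domain of $M^{2}_{l}$; then the $m$-back-sim clauses yield items 9--12 directly, while items 13 and 14 are the goal and infinite-history duals.

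For the if direction, given any $M_{h} \models \mathcal{D}_{h}$, I would take the supplied $M^{1}_{l}$, $M^{2}_{l}$, and $P$, and construct $B_{1}$ and $B_{2}$ explicitly. Let $B_{1}$ consist of all pairs $\langle s_{h}, s_{l}\rangle$ with $s_{l}$ reachable by $anyllps$ in $M^{1}_{l}$ and $s_{h}$ $m$-isomorphic to $s_{l}$, and let $B_{2}$ consist of pairs with $s_{l} \in P$ and $s_{h}$ $m$-isomorphic to $s_{l}$; items 1 and 8 place the initial pair in each relation. I would then verify by induction on high-level action sequences that $B_{1}$ is an $m$-sim, using items 2, 4, and 5 for the per-transition clause and item 7 for the infinite-history clause, and that $B_{2}$ is an $m$-back-sim, using items 9--12 for the per-transition clause and item 14 for the infinite-history clause, mirroring the corresponding construction in the if direction of Theorem~\ref{s-abs-nd-case}. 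The goal implications follow from items 6 and 13, respectively.

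The hardest step will be correctly propagating $m$-isomorphism across transitions when the high-level action type is non-deterministic: items 4, 5, 11, and 12 supply a single existential witness $\vec{u}$ that must simultaneously align the low-level refinement with some deterministic execution $A_{d}(\vec{x}, \vec{u})$ on the high-level side and enforce all relational and functional successor state axioms at once. A secondary subtlety is the treatment of trajectory constraints: items 7 and 14 state the infinite-history equivalences directly over $anyllps$, bypassing the finer translation of $m(C)$ given by Proposition~\ref{simulation between infinite histories}, but still requiring that $B_{1}$ and $B_{2}$ be compatible with the relevant infinite histories, which I would establish by noting that both relations are closed under successors along matching high-level/low-level action sequences. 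Finally, I must confirm that the restriction to $anyllps$-reachable situations in $B_{1}$ and to $P$ in $B_{2}$ is preserved by the simulation clauses; for $B_{1}$ this is automatic, and for $B_{2}$ it follows from item 9, which provides a successor already inside $P$.
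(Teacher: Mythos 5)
Your proposal is correct and follows essentially the same route as the paper: both directions pass between the theory-level definition of complete abstraction and explicit $m$-simulation/$m$-back-simulation relations $B_{1}$ and $B_{2}$ on two separate low-level models, with items 1--7 extracted from (resp.\ used to build) $B_{1}$ and items 8--14 from $B_{2}$, and with the infinite-history items 7 and 14 discharging the trajectory-constraint clauses. The only notable deviation is your choice of $P$ as the image of $B_{2}$ rather than the paper's set of $anyllps$-reachable situations; this is an equally valid (arguably cleaner) witness, since the back-simulation clauses are only available at situations paired by $B_{2}$.
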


\begin{proof}
   Firstly, we prove the \textbf{only-if direction}:

\begin{enumerate}
  \item[1.] According to the definition of complete abstraction on theory level, we know that for each high-level model $M_{h}$ of $\mathcal{D}_{h}$, there exist a LL model $M^{1}_{l}$, such that $M_{h}$ is a complete abstraction of $M^{1}_{l}$ via a $m$-simulation relation $B_{1}$. Then, we have $S^{M_{h}}_{0}\sim_{m} S^{M^{1}_{l}}_{0}$;
\end{enumerate}

\begin{enumerate}
  \item[2.] Let $s_{l}$ be a situation of $M^{1}_{l}$ which satisfies the formula $Do(anyllps,  S_{0},s)$, then based on the definition of $m$-simulation relation, we have that there is a situation $s_{h}$ of $M_{h}$ such that $\langle s_{h},s_{l} \rangle \in B_{1}$. Thus, for any HL action $A(\vec{x})$, $s_{h}$ satisfies $\forall \vec{x}. \Pi_{A}(A(\vec{x}),s)$ iff $s_{l}$ satisfied $\forall \vec{x}. m(\Pi_{A}(A(\vec{x}),s))$. Furthermore, if $M^{1}_{l},v[s/s_{l}]\models \forall \vec{x}. \exists s'_{l}. Do(m(A(\vec{x})),s,s'_{l})$, then $M_{h},v[s/s_{h}]\models \forall \vec{x}. \exists s'_{h}. Do(A(\vec{x}),s,s'_{h})$, which means that $M_{h},v[s/s_{h}]\models \forall \vec{x}. \Pi_{A}(A(\vec{x}),s)$. Therefore, $M^{1}_{l},v[s/s_{l}]\models \forall \vec{x}. m(\Pi_A(\vec{x},s))$.
\end{enumerate}

\begin{enumerate}
  \item[3.] Let $s_{l}$ be a situation of $M^{1}_{l}$ which satisfies the formula $Do(anyllps,  S_{0},s)$, then based on the definition of $m$-simulation relation, we have that there is a situation $s_{h}$ of $M_{h}$ such that $\langle s_{h},s_{l} \rangle \in B_{1}$, and $M^{1}_{l},v[s/s_{l}]\models \forall \vec{x}. Term(A(\vec{x}),s,C_{l})$ for any HL action $A(\vec{x})$.
\end{enumerate}

\begin{enumerate}
  \item[4.] Let $s_{l}$ be a situation of $M^{1}_{l}$ which satisfies the formula $Do(anyllps,  S_{0},s)$, then based on the definition of $m$-simulation relation, we have that there is a situation $s_{h}$ of $M_{h}$ such that $\langle s_{h},s_{l} \rangle \in B_{1}$. For each HL action $A(\vec{x})$, and any situation $s'_{l}$, if $M^{1}_{l},v[s/s_{l}, s'/s'_{l}]\models \forall \vec{x}. Do(m(A(\vec{x})),s,s')$, then there exists an execution $A(\vec{x}, \vec{u})$ of $A(\vec{x})$, and a situation $s'_{h}$, such that $M_{h},v[s/s_{h},s'/s'_{h}]\models \forall \vec{x}, \exists \vec u. Do(A_{d}(\vec{x}, \vec{u}),s,s')$, and $\langle s'_{h},s'_{l} \rangle \in B$. Thus, $s_{l}$ satisfies $\forall \vec{x},\vec{y},\exists \vec{u}. m(\phi_{P,A_{d}}(\vec{x},\vec{y},\vec{u},s))$ iff $s_{h}$ satisfies $\forall \vec{x},\vec{y},\exists \vec{u}.\phi_{P,A_{d}}(\vec{x},\vec{y},\vec{u},s)$ iff $s'_{h}$ satisfies $\forall \vec{y}. P(\vec{y},s')$ iff $s'_{l}$ satisfies $\forall \vec{y}. m(P(\vec{y},s'))$.
\end{enumerate}

\begin{enumerate}
  \item[5.] The proof of item 5 is similar to item 4.
\end{enumerate}

\begin{enumerate}
  \item[6.] Let $s_{l}$ be a situation of $M^{1}_{l}$ which satisfies the formula $Do(anyllps,  S_{0},s)$, then based on the definition of $m$-simulation relation, we have that there is a situation $s_{h}$ of $M_{h}$ such that $\langle s_{h},s_{l} \rangle \in B_{1}$. Suppose $s_{l}$ satisfies $G_{l}$, then $s_{h}$ satisfies  $G_{h}$ based on the weak complete abstraction on theory level. Thus, we have that $s_{l}$ satisfies $m(G_{h})$.
\end{enumerate}

\begin{enumerate}
  \item[7.] According to the definitions of complete abstraction and $m$-simulation, if there exists a LL infinite history $h_{l}$ such that $M^{1}_{l},v[h/h_{l}]\models Infexe(anyllps, h_{l}, C_{l})$, we can construct an HL infinite history $h_{h}$ such that $\langle h_{h},h_{l} \rangle \in B_{1}$. Then we can get $M_{h},v[h/h_{h}]\models Infexe(anyllps, h_{h}, C_{h})$.
\end{enumerate}

    For item 8-14, we firstly construct a situation set $P$ of $M^{2}_{l}$ as follows: (i) $S^{M^{2}_{l}}_{0}\in P$; (ii) for any LL program $anyllps$, if there is a situation $s_{l}$ which satisfies $Do(anyllps, S^{M^{2}_{l}}_{0},s_{l})$, then we let $s_{l}\in P$. Now we prove item 8-14.
\vspace{2mm}

\begin{enumerate}
  \item[8.] According to the definition of complete abstraction on theory level, we know that for each high-level model $M_{h}$ of $\mathcal{D}_{h}$, there exist a LL model $M^{2}_{l}$, such that $M_{h}$ is a sound abstraction of $M^{2}_{l}$ of via a $m$-back-simulation relation $B_{2}$. Then, we have  $S^{M_{h}}_{0}\sim_{m} S^{M^{2}_{l}}_{0}$;
\end{enumerate}

\begin{enumerate}
  \item[9.] Let $s_{h}$ be a reachable situation of $M_{h}$ via executing $anyhlas$, then according to the construction of $P$, we can find a situation $s_{l}\in P$ of $M^{2}_{l}$, such that $\langle s_{h},s_{l} \rangle \in B_{2}$. Furthermore, for any HL action $A(\vec{x})$, we have that if $s_{h}$ satisfies $\forall \vec{x}. \Pi_{A}(A(\vec{x}),s)$, then there exists a situation $s'_{l}$ such that $s_{l}$ satisfies $\forall \vec{x}. Do(m(A(\vec{x})),s_{l},s'_{l})$ and $s'_{l}\in P$.
\end{enumerate}

\begin{enumerate}
  \item[10.] Let $s_{h}$ be a reachable situation of $M_{h}$ via executing $anyhlas$, then according to the construction of $P$, we have that there exists a situation $s_{l}\in P$ of $M^{2}_{l}$, such that $\langle s_{h},s_{l} \rangle \in B_{2}$. Then for any HL action $A(\vec{x})$, based on the definition of $m$-back-simulation, we can get that $M^{2}_{l},v[s/s_{l}]\models \forall \vec{x}. Term(A(\vec{x}),s,C_{l})$.

\end{enumerate}

\begin{enumerate}
  \item[11.] Let $s_{h}$ be a reachable situation of $M_{h}$ via executing $anyhlas$, then according to the construction of $P$, we have that there exists a situation $s_{l}\in P$ of $M^{2}_{l}$, such that $\langle s_{h},s_{l} \rangle \in B_{2}$. For each HL action $A(\vec{x})$, if there exists an execution $A(\vec{x},\vec{u})$ of $A(\vec{x})$, and a situation $s'_{h}$, such that $M_{h},v[s/s_{h},s'/s'_{h}]\models \forall \vec{x}, \exists \vec u. Do(A_{d}(\vec{x},\vec{u}),s,s')$, then based on the definition of $m$-back-simulation and the construction of $P$, we can get that there exists a situation $s'_{l}$ such that $M^{2}_{l},v[s/s_{l}, s'/s'_{l}]\models \forall \vec{x}. Do(m(A(\vec{x})),s,s')\land P(s')$, and $\langle s'_{h}, s'_{l} \rangle \in B_{2}$. Then, $s_{l}$ satisfies $\forall \vec{x},\vec{y},\exists \vec{u}.  m(\phi_{P,A_{d}}(\vec{x},\vec{y},\vec{u},s))$ iff $s_{h}$ satisfies $\forall \vec{x},\vec{y},\exists \vec{u}.  \phi_{P,A_{d}}(\vec{x},\vec{y},\vec{u},s)$ iff $s'_{h}$ satisfies $\forall \vec{y}. P(\vec{y},s')$ iff $s'_{l}$ satisfies $\forall \vec{y}. m(P(\vec{y},s'))$.
\end{enumerate}

\begin{enumerate}
  \item[12.]    The proof of item 12 is similar to item 11.
\end{enumerate}

\begin{enumerate}
  \item[13.] Let $s_{h}$ be a reachable situation of $M_{h}$ via executing $anyllps$, then based on the construction of $P$, there is a situation $s_{l}$ of $M^{2}_{l}$, such that $\langle s_{h},s_{l} \rangle \in B_{2}$. Suppose $s_{l}$ satisfies $G_{l}$, then according to the condition 2 of the definition of complete abstraction on theory level, we have that $s_{h}$ satisfies $G_{h}$. Thus, $s_{l}$ satisfies $m(G_{h})$,
\end{enumerate}

\begin{enumerate}
  \item[14.] According to the definitions of sound abstraction and $m$-back-simulation, if there exists a HL infinite history $h_{h}$ such that $M_{h},v[h/h_{h}]\models Infexe(anyhlas, h_{h}, C_{h})$, then we can construct an LL infinite history $h_{l}$ such that $\langle h_{h},h_{l} \rangle \in B_{2}$. Then we can get that $M^{2}_{l},v[h/h_{l}]\models Infexe(anyllps, h_{l}, C_{l})$.
\end{enumerate}

Secondly, we prove the \textbf{if direction}:
\vspace{2mm}

For $M^{1}_{l}$ and $M_{h}$, we firstly prove that $M_{h}$ is a complete abstraction of $M^{1}_{l}$. For this purpose, we construct a $m$-simulation relation $B_{1}$ between the situation and infinite history domains of $M_{h}$ and $M^{1}_{l}$ as follows:
    \begin{itemize}
     \item[1.]   $\langle S^{M_{h}}_{0}, S^{M^{1}_{l}}_{0}\rangle \in  B_{1}$,  i.e., $S^{M_{h}}_{0}$ is $m$-isomorphic to $S^{M^{1}_{l}}_{0}$;
     \item[2.] Let $\langle s_{h},s_{l}\rangle \in B_{1}$, where $s_{l}$ is a situation reached from  $S^{M^{1}_{l}}_{0}$ via executing $anyllps$, then $s_{h}$ is $m$-isomorphic to $s_{l}$. Let $A(\vec{x})$ be an arbitrary HL action. %By item 2, we have that if $s_{l}$ satisfies $poss(m(A(\vec{x})),s_{l})$, then $s_{h}$ satisfies $\Pi_A(\vec{x},s_{h})$.
         By item 3, if $m(A(\vec{x}))$ is executable in the situation $s_{l}$, then we have $M_{l},v[s/s_{l}]\models \forall \vec{x}. Term(m(A(\vec{x})),s,C_{l})$. By item 2, if $m(A(\vec{x}))$ is executable in $s_{l}$, then $A(\vec{x})$ is executable in $s_{h}$. For each situation $s'_{l}$ satisfies $\forall \vec{x}.Do(m(A(\vec{x})),s_{l},s'_{l})$ and all the situations $s'_{h}$ satisfies $\forall \vec{x}. Do(A(\vec{x}),s_{h},s'_{h})$, we add all the pairs $\langle s'_{h},s'_{l} \rangle$ in $B_{1}$.  By item 4 (the case of item 5 can be discussed similarly), for any execution $A_{d}(\vec{x},\vec{u})$ of $A(\vec{x})$, $s'_{l}$ satisfies $\forall \vec{y}. m(P(\vec{y},s'_{l}))$ iff $s_{l}$ satisfies $\forall \vec{x},\vec{y},\exists \vec{u}. m(\Pi_{A_{d}}(\vec{x},\vec{y},\vec{u},s_{l}))$ iff $s_{h}$ satisfies $\forall \vec{x},\vec{y},\exists \vec{u}. \Pi_{A_{d}}(\vec{x},\vec{y},\vec{u},s_{h})$. For the situations $s'_{h}$ that do not satisfy $\forall \vec{y}. P(\vec{y},s'_{h})$, we then delete all the pairs $\langle s'_{h},s'_{l}\rangle$ from $B_{1}$.

         %Then we have that $s'_{l}$ satisfies $m(P(\vec{y},s'_{l}))$ iff $s_{l}$ satisfies $m(\Pi_{A_{d}}(\vec{x},\vec{y},\vec{u},s_{l}))$ iff $s_{h}$ satisfies $\Pi_{A_{d}}(\vec{x},\vec{y},\vec{u},s_{h})$ iff $s'_{h}$ satisfies $P(\vec{y},s'_{h})$. Thus, we have that $s'_{h}$ is $m$-isomorphic to $s'_{l}$, and we add $\langle s'_{h}, s'_{l} \rangle $ in $B_{1}$.
     \item[3.]  $B$ is a $m$-simulation relation follows its construction coupled with item 7.
   \end{itemize}
Furthermore, if $\langle s_{h},s_{l}\rangle \in B_{1}$ and $s_{l}$ satisfies $G_{l}$, then $s_{h}$ satisfies $G_{h}$. Since $s_{h}$ and $s_{l}$ are $m$-isomorphic, $s_{l}$ satisfies $G_{l}$, by Item 6, $s_{l}$ satisfies $m(G_{h})$, thus $s_{h}$ satisfies $G_{h}$.
\vspace{2mm}

For $M^{2}_{l}$ and $M_{h}$, we prove that $M_{h}$ is a sound abstraction of $M^{2}_{l}$. For this purpose, in the following, we construct a $m$-back-simulation relation $B_{2}$ between the situation and infinite history domains of $M_{h}$ and $M^{2}_{l}$, particularly, we consider the situation set $P$ of $M^{2}_{l}$ we constructed above.
    \begin{itemize}
     \item[1.]   $\langle S^{M_{h}}_{0}, S^{M^{2}_{l}}_{0}\rangle \in  B_{2}$,  i.e., $S^{M_{h}}_{0}$ is $m$-isomorphic to $S^{M^{2}_{l}}_{0}$, where $S^{M^{2}_{l}}_{0}\in P$;
     \item[2.] Let $\langle s_{h},s_{l}\rangle \in B_{2}$, where $s_{l}\in P$ is a situation reached from  $S^{M^{2}_{l}}_{0}$ via executing $anyllps$, then $s_{h}$ is $m$-isomorphic to $s_{l}$. Let $A(\vec{x})$ be an arbitrary HL action. %By item 2, we have that if $s_{l}$ satisfies $poss(m(A(\vec{x})),s_{l})$, then $s_{h}$ satisfies $\Pi_A(\vec{x},s_{h})$.
         By item 10, if $m(A(\vec{x}))$ is executable in the situation $s_{l}$, then we have $M_{l},v[s/s_{l}]\models Term(m(A(\vec{x})),s,C_{l})$. By item 9, if $A(\vec{x})$ is executable in $s_{h}$, then $m(A(\vec{x}))$ is executable in $s_{l}$. For each situation $s'_{h}$ satisfies $\forall \vec{x}. Do(A(\vec{x}),s_{h},s'_{h})$ and all the situations $s'_{l}\in P$ that satisfy $\forall \vec{x}. Do(m(A(\vec{x})),s_{l},s'_{l})$, we add all the pairs $\langle s'_{h},s'_{l}\rangle$ in $B_{2}$.  By item 11 (the case of item 12 can be discussed similarly), for any execution $A_{d}(\vec{x},\vec{u})$ of $A(\vec{x})$, $s'_{l}$ satisfies $\forall \vec{y}. m(P(\vec{y},s'_{l}))$ iff $s_{l}$ satisfies $\forall \vec{x},\vec{y},\exists \vec{u}. m(\Pi_{A_{d}}(\vec{x},\vec{y},\vec{u},s_{l}))$ iff $s_{h}$ satisfies $\forall \vec{x},\vec{y},\exists \vec{u}.\Pi_{A_{d}}(\vec{x},\vec{y},\vec{u},s_{h})$. For the situations $s'_{h}$ that do not satisfy $\forall \vec{y}. P(\vec{y},s'_{h})$, we then delete all the pairs $\langle s'_{h},s'_{l}\rangle$ from $B_{2}$.

         %Then we have that $s'_{l}$ satisfies $m(P(\vec{y},s'_{l}))$ iff $s_{l}$ satisfies $m(\Pi_{A_{d}}(\vec{x},\vec{y},\vec{u},s_{l}))$ iff $s'_{h}$ satisfies $P(\vec{y},s'_{h})$. Thus, we have that $s'_{h}$ is $m$-isomorphic to $s'_{l}$, and we add $\langle s'_{h}, s'_{l} \rangle $ in $B_{1}$.
     \item[3.]  $B_{2}$ is a $m$-back-simulation relation follows its construction coupled with item 14.
   \end{itemize}
Furthermore, if $\langle s_{h},s_{l}\rangle \in B_{2}$ and $s_{l}$ satisfies $G_{l}$, then $s_{h}$ satisfies $G_{h}$. Since $s_{h}$ and $s_{l}$ are $m$-isomorphic, $s_{l}$ satisfies $G_{l}$, by Item 13, $s_{l}$ satisfies $m(G_{h})$, thus $s_{h}$ satisfies $G_{h}$.

\end{proof}

\section{Implementation Theory}

In this section, by introducing some restrictions in Theorem \ref{s-abs-nd-case}, we first give a sufficient condition for sound abstractions that is first-order verifiable. Then we discuss how to verify the sufficient condition with first-order theorem prover when the g-planning problem abstractions are QNPs.

We first discuss how to obtained a sufficient condition for sound abstractions from Theorem 1.
%Although Theorem \ref{s-abs-nd-case} provides us with a necessary and sufficient condition for sound abstractions, but it is not first-order verifiable. Thus, standard verification techniques can not provide convenience for us. For this reason,
In the following,
we use  $\models_{fo}$ to denote classic first-order entailment:

(1). Situations that satisfy $Do(anyllps, S_{0}, s)$ are all executable. State constraints are formulas that hold in all executable situations. Given a BAT $\mathcal{D}$ and a formula $\phi(s)$, $\phi(s)$ is a state constraint for $\mathcal{D}$ if $\mathcal{D}\models \forall s.Exec(s)\supset \phi(s)$. Thus, for getting a sufficient condition for sound abstractions, we replace the condition $Do(anyllps, S_{0}, s)$  by providing LL state constraints in tasks 2, 3, 4 and 6 in Theorem 1. We use $\mathcal{D}_{sc}$ to denote a set of state constraints, and abuse $\mathcal{D}_{sc}$ as the conjunction of its elements.

(2). Determining whether a given program terminates is the quintessential undecidable problem. We assume that any HL action refinement does not involve iteration. Then, the formula $\psi_{T}$ in task 4 is true trivially.

(3). We restrict that each HL action is deterministic, thus, we can ignore trajectory constraints, and then ignore task 5.

(4). Since task 2 involves the reasoning about LL  programs, we use existentially extended regression to compute executability conditions of Golog programs. Given a program $\delta$ and a situation $s$, the executability condition $pre(\delta,s)$ of $\delta$ in the situation $s$ can be computed as $\mathcal{R}^{E}[\top(s), A(\vec{x})]$. For space saving, we use the the following abbreviation:

\vspace{1mm}
$\xi_{A} \doteq \bigwedge_{A\in \mathcal{A}_{h}}\forall \vec{x}.pre(m(A(\vec{x})),s)\equiv m(\Pi_{A}(\vec{x},s)).$

\vspace{1mm}

\noindent Again, for getting a sufficient condition for sound abstractions, we verify the following stronger task which can implies both task 2 and task 3: $\models_{fo} \forall s. \mathcal{D}_{sc}(s) \supset \xi_{A}$,

\vspace{1mm}

 (5). For further discussion, we introduce two more abbreviations as follows:

\vspace{1mm}
\noindent
$ \zeta_P \doteq \bigwedge_{A\in \mathcal{A}_{h}} \forall \vec{x}. pre(m(A(\vec{x}),s)) $\\
\hspace*{2.2em}	$\supset \bigwedge_{P\in \mathcal{P}_{h}} \forall \vec{y}.[\mathcal{R}^{E}[ m(P(\vec{y}))[s], m(A(\vec{x}))]  $\\
\hspace*{2.2em} $\supset m(\phi_{P,A}(\vec{x}, \vec{y}, s))] \land [m(\phi_{P,A}(\vec{x}, \vec{y}, s)) $  \\
\hspace*{2.2em} $\supset\mathcal{R}^{U}[ m(P(\vec{y}))[s],m(A(\vec{x}))]$.

\vspace{1mm}
\noindent
 $\zeta_f \doteq \bigwedge_{A\in \mathcal{A}_{h}}\forall \vec{x}. pre(m(A(\vec{x}),s))$\\
\hspace*{2.2em}	$\supset \bigwedge_{f\in \mathcal{F}_{h}}\forall \vec{y}, z.[\mathcal{R}^{E}[ m(f(\vec{y})=z)[s], m(A(\vec{x}))]  $\\
\hspace*{2.2em}	$\supset m(\psi_{f,A}(\vec{x},\vec{y},z,s))] \land [m(\psi_{f,A}(\vec{x},\vec{y},z,s))$  \\
\hspace*{2.2em}	$\supset \mathcal{R}^{U}[m(f(\vec{y})=z)[s],m(A(\vec{x}))]].$

 \vspace{1mm}
\noindent Based on (1) and the two extended regression definitions, we can get that task 4 is equivalent to:
\begin{equation*}
	\setlength{\abovedisplayskip}{1mm}
	\setlength{\belowdisplayskip}{1mm}
    \models_{fo} \forall s. \mathcal{D}_{sc}(s) \supset  \zeta_P \land \zeta_f.
\end{equation*}
% \begingroup
% \addtolength{\jot}{-1mm}
% \begin{flalign*}
%   &\mathcal{D}_{l}\models\forall \vec{x},\vec{y}, z, s. \mathcal{D}_{sc}(s)\land pre(m(A(\vec{x}),s))\supset \xi_2(s)&
% %   &\ \ [\mathcal{R}^{E}[ m(f(\vec{y},s')=z), m(A(\vec{x}))] \supset m(\psi_{f,A}(\vec{x},\vec{y},z,s))]\land &\\
% %   &\ \ [m(\psi_{f,A}(\vec{x},\vec{y},z,s)) \supset \mathcal{R}^{U}[m(f(\vec{y},s)=z),m(A(\vec{x}))]].&
% \end{flalign*}
% \endgroup

% For the item 3 in Theorem \ref{s-abs-nd-case}, for each HL functional fluent $f\in\mathcal{F}_{h}$, and HL action $A\in \mathcal{A}_{h}$, similar to task 3, we prove the following task:%This item means that $\mathcal{D}_{l}$ must entail the mapped successor state axioms provided that the LL program can be executable successfully. % we split item 4 into the.
% \begingroup
% \addtolength{\jot}{-1mm}
% \begin{flalign*}
%   &\mathcal{D}_{l}\models\forall \vec{x},\vec{y}, z, s, s'. \mathcal{D}_{sc}(s)\land pre(m(A(\vec{x}),s))\supset &\\
%   &\ \ [\mathcal{R}^{E}[ m(f(\vec{y},s')=z), m(A(\vec{x}))] \supset m(\psi_{f,A}(\vec{x},\vec{y},z,s))]\land &\\
%   &\ \ [m(\psi_{f,A}(\vec{x},\vec{y},z,s)) \supset \mathcal{R}^{U}[m(f(\vec{y},s')=z),m(A(\vec{x}))]].&
% \end{flalign*}
% \endgroup

We have the following result based on the analysis above:

\begin{corollary}\label{implementation_theory}
  Given a g-planning problem $\mathcal{G}_{l}$ and its abstraction $\mathcal{G}_{h}$, suppose that all the HL actions are deterministic and their refinements do not involve iteration, then $\mathcal{G}_{h}$ is a sound abstraction of $\mathcal{G}_{l}$ if:
  \begin{enumerate}[leftmargin=*]
    \item $\mathcal{D}_{l}^{S_{0}}\models_{fo}  m(\phi)$, where $\phi\in \mathcal{D}_{h}^{S_{0}}$;
    \item $\models_{fo} \forall s. \mathcal{D}_{sc}(s)\supset \xi_A$;
    \item $\models_{fo} \forall s. \mathcal{D}_{sc}(s) \supset  \zeta_P \land \zeta_f$;
    \item $\models_{fo} \forall s. \mathcal{D}_{sc}(s)\land m(G_{h})[s]\supset G_{l}[s]$.
  \end{enumerate}
\end{corollary}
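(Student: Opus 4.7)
The plan is to prove Corollary~\ref{implementation_theory} by showing that its four conditions, together with the standing assumptions that every high-level action is deterministic and that refinements contain no iteration, imply the eight conditions of Theorem~\ref{s-abs-nd-case}; the corollary will then follow. The argument is essentially a bookkeeping exercise that turns each simplification discussed in the paragraphs preceding the corollary into an explicit implication, using Propositions~\ref{prop_Eregression} and~\ref{prop_Uregression} as the main technical tools.

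First I would dispatch the items of Theorem~\ref{s-abs-nd-case} that become trivial or direct under the assumptions. Item~1 is immediate from the corollary's first clause since $\mathcal{D}_l^{S_0}$ is first-order. Item~4 requires $\psi_T$, but because no refinement $m(A(\vec{x}))$ uses the iteration operator, every refinement is a finite, loop-free Golog program, so $\textit{Term}(m(A(\vec{x})),s,C_l)$ holds unconditionally. Item~7 concerns the equivalence of infinite executions of $anyllps$ under $C_l$ and under $m(C_h)$; since we are dealing with deterministic HL actions and trajectory constraints are dropped from the corollary's hypotheses (equivalently $C_l = C_h = \top$), this item becomes a tautology. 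Item~8 is the corollary's fourth clause once we notice that any situation $s$ with $Do(anyllps, S_0, s)$ is executable and therefore satisfies the state constraints $\mathcal{D}_{sc}(s)$. Finally, the existential quantifier $\exists \vec{u}$ inside $\xi_P$ and $\xi_f$ collapses under determinism, so Items~5 and~6 reduce to per-action biconditional forms of the successor-state axioms.

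The substantive part of the proof is then to derive Items~2, 3, 5, 6 from the corollary's $\xi_A$, $\zeta_P$ and $\zeta_f$. For Items~2 and~3 I would unfold $pre(m(A(\vec{x})),s)$ as $\mathcal{R}^E[\top, m(A(\vec{x}))]$ and apply Proposition~\ref{prop_Eregression} to convert the corollary's $\xi_A$ into the semantic statement ``the refinement $m(A(\vec{x}))$ is LL-executable at $s$ iff $m(\Pi_A(\vec{x},s))$ holds''. The left-to-right direction is exactly Item~2, while the right-to-left direction gives Item~3 by choosing $s'=s$ and invoking reflexivity of $R$. For Items~5 and~6, I would split each conjunct of $\zeta_P$ (respectively $\zeta_f$): by Proposition~\ref{prop_Eregression} the $\mathcal{R}^E$ conjunct says ``some terminating execution of $m(A(\vec{x}))$ from $s$ reaches an $s'$ where $m(P(\vec{y}))$ holds implies $m(\phi_{P,A}(\vec{x},\vec{y},s))$'', and by Proposition~\ref{prop_Uregression} the $\mathcal{R}^U$ conjunct says ``$m(\phi_{P,A}(\vec{x},\vec{y},s))$ implies every terminating execution reaches such an $s'$''. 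Determinism makes the two coincide, yielding the biconditional encoded in $\xi_P$.

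The main obstacle I anticipate is Item~3, because the characterization theorem does not let one name the same situation $s$ as the witness~$s'$ without justification: the auxiliary predicate $R$ has a nontrivial second-order definition, and the corollary makes no reference to it. I would handle this by arguing that the reflexive base case $P(S_0,S_0)$ together with the standard inductive clauses of $R$ yield $R(s,s)$ for every executable $s$, so taking $s'=s$ and $s''$ as the unique successor produced by the deterministic refinement discharges the existential. A secondary care point is the replacement of the guard $Do(anyllps,S_0,s)$ by $\mathcal{D}_{sc}(s)$: this is sound because $\mathcal{D}_l$ entails $\forall s.\,Exec(s)\supset \mathcal{D}_{sc}(s)$ by definition of a state constraint, and every situation produced by $anyllps$ is executable. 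Once these two points are settled, each of the eight items of Theorem~\ref{s-abs-nd-case} has been verified and the corollary follows.
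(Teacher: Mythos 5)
Your proposal is correct and follows essentially the same route as the paper, whose justification for the corollary is exactly the reduction sketched in points (1)--(5) preceding it: replacing the reachability guard $Do(anyllps,S_0,s)$ by state constraints, discharging $\psi_T$ via the no-iteration assumption, dropping the trajectory-constraint item via determinism, and converting Items~2, 3, 5, 6 of Theorem~\ref{s-abs-nd-case} into the regression-based conditions $\xi_A$, $\zeta_P$, $\zeta_f$ via Propositions~\ref{prop_Eregression} and~\ref{prop_Uregression}. Your explicit derivation of $R(s,s)$ for every executable refinement of a HL situation (by induction from the base case $P(S_0,S_0)$) is a detail the paper leaves implicit when it asserts that the strengthened task implies both Items~2 and~3, and it is handled correctly.
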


The soundness of abstractions can be verified by feeding all 4 tasks above into a theorem prover. However, existing first-order theorem provers do not support counting or transitive closure. QNPs are popular abstraction models for g-planning problems. We now turn to discuss how to verify QNP abstractions. In particular, we develop methods to handle the counting and transitive closure.

QNPs are extensions of classical planning problems. Compared to classical planning problems, state variables of QNPs contain non-negative numerical variables $n$. These variables introduce the non-propositional atoms $n = 0$ and their negations $n>0$. These literals can appear in the initial situations, action preconditions, and goals of QNPs. The effects of actions of QNPs on a numerical variable $n$ can be increments or decrements denoted by $n\uparrow$ and $n\downarrow$, respectively.

%\begin{definition}\rm
%  A \text{QNP} is a tuple $Q=\langle F, V, I, O, G \rangle $, where $F$ and $V$ are sets of propositional and numerical variables respectively. $I$, and $G$ denote the initial, and goal conditions, respectively. $O$ is a set of actions $a$ with preconditions, propositional numerical and numerical effects that are denoted as $\text{Pre}(a)$, $\text{Eff}(a)$ and $N(a)$ respectively. The $F$-literals can appear in $I$, $G$, $\text{Pre}(a)$, and $\text{Eff}(a)$, while $V$-literals can appear in $I$, $G$, $\text{Pre}(a)$. The numerical effects $N(a)$ only contain special atoms of the form $n\uparrow$ or $n\downarrow$ for the variable $n\in V$. Actions with the $n\downarrow$ effect must feature the precondition $n>0$ for any variable $n\in V$.
%\end{definition}
\vspace{+1mm}
\noindent {\bf Example \ref{eg-clearing a block} cont'd.}
An abstraction for the g-planning problem $ClearA$ is a QNP $Q=\langle F, V, I, O, G \rangle $, where $F=\{H\}$ contains a propositional variable $H$ that means the agent holding a block, $V=\{n\}$ contains a numerical variable $n$ that means the number of blocks above $A$. The initial state $I$ is $n>0\land \neg H$, the goal state $G$ is $n=0$, and the actions $O=\{pickabove, putaside\}$ are defined as follows:

\vspace{1mm}
\noindent \hspace*{3mm}$pickabove:\langle \neg H\land n>0;H,n\downarrow \rangle;\ putaside:\langle H;\neg H \rangle.$

\vspace{1mm}
 In QNP abstraction case, the verification for task 3 in Theorem \ref{implementation_theory} can be more specific . For each HL relational fluent $p\in\mathcal{P}_{h}$, and HL action $A\in \mathcal{A}_{h}$, $A$ makes $p$ true (the false case can be discussed similarly) means that in LL, all the executions of $m(A)$ make $m(p)$ true, then we have the  task $3^{*}$:

\vspace{1mm}
\noindent \hspace*{0.5em}$\models_{fo} \forall s. \mathcal{D}_{sc}(s) \land $\\
   \hspace*{1.5em}$ \forall \vec{x}. pre(m(A(\vec{x})),s)\supset \mathcal{R}^{U}[m(p(s)),m(A(\vec{x}))].$
\vspace{1mm}

For the verification tasks involving functional fluents, we only discuss a restricted case, that is,  effects of actions on functional fluents are $\pm 1$. Given a functional fluent $f\in\mathcal{F}_{h}$, and action $A\in \mathcal{A}_{h}$,  if $A$ makes $n$ increase by 1 (the decrease case can be discussed similarly), we prove the task $4^{*}$:

\vspace{1mm}
\noindent \hspace*{0.5em} $\models_{fo} \forall s. \mathcal{D}_{sc}(s) \land $\\
\hspace*{1.5em}$\forall \vec{x}. pre(m(A(\vec{x}),s)\supset\forall s'. Do(m(A(\vec{x})),s,s')\supset $\\
  \hspace*{1.5em}$\forall \vec{y},z. m(f(\vec{y},s')=k+1)\equiv m(\psi_{f,A}(\vec{x},\vec{y},k,s)).$

\vspace{1mm}
\noindent which means that if $m(A)$ is executable in any LL situation $s$, then for any situations $s'$ that  can arrived by executing $m(A)$ from $s$, we have $m(n)[s']=m(n)[s]+1$. Assuming that $m(n)=\#x.\phi(x)$, we have the formula $\Psi$ below that holds:

\vspace{1mm}
  \noindent $\hspace*{1em}[\exists x. \phi(x,s')\land \neg \phi(x,s)]\land[\forall x. \phi(x,s)\supset \phi(x,s')]\land$\\
  $\hspace*{1.5em}[\forall x,y. \phi(x, s)\land\neg \phi(x,s')\land \phi(y,s')\land \phi(y,s)\supset x=y].$

\vspace{1mm}
\noindent This formula says that: there exists one and only one object that makes $\phi(x)$ true from false after a program execution. Then base on Proposition \ref{prop_Uregression}, we can get the following task 4$^{\#}$, which is equivalent to task $4^{*}$:

\vspace{1mm}
\noindent \hspace*{1em}$\models_{fo} \forall s. \mathcal{D}_{sc}(s)\land $\\
\hspace*{1.5em}$\forall \vec{x}. pre(m(A(\vec{x})),s) \supset \mathcal{R}^{U}[{\Psi(s),m(A(\vec{x}))}].$

\vspace{1mm}

Transitive closure formulas are often used to define counting terms. Thus, task $4^{\#}$ may involve regression about transitive closure. In fact, the definition of one-step regression of transitive closure formulas is the same as Definition \ref{def_reg}.
%We define :

% \begin{definition}\rm
% Given a BAT $\mathcal{D}$, let $\phi$ be a transitive closure formula. We use $\Reg_{\mathcal{D}}[\phi]$ to denote the formula obtained from $\phi$ by the following steps: replace each atom $P(\vec{t}, do(\alpha, \sigma))$ with $\phi_P(\vec{t}, \alpha, \sigma)$;  replace each precondition atom $Poss(A(\vec{t}), \sigma)$ with $\Pi_{A}(\vec{t}, \sigma)$; and further simplify the result with $\at_{una}$.
% \end{definition}

\noindent {\bf Example \ref{eg-clearing a block} cont'd.}
Given the successor state axiom of fluent $on(x,y,s)$ in Example \ref{eg-clearing a block}, and a transitive closure formula $\phi$:
\begin{equation*}
	\setlength{\abovedisplayskip}{1mm}
	\setlength{\belowdisplayskip}{0mm}
    [TC_{x,y} on(x,y,do(unstack(A,B),s))](x,C)],
\end{equation*}

\vspace{1mm}
\noindent the regression result $\mathcal{R}_{\mathcal{D}}[\phi]$ of $\phi$ related to the concrete action $unstack(A,B)$ is as follows:
\begin{equation*}
	\setlength{\abovedisplayskip}{1mm}
	\setlength{\belowdisplayskip}{0mm}
    [TC_{x,y} on(x,y,s)\land (x\neq A\lor y\neq B)](x,C).
\end{equation*}

\vspace{1mm}

To handle the regression of transitive closure formulas with existing theorem provers, for a given transitive closure formula $\phi(\vec{x})$, we first get the regression result $\mathcal{R}_{\mathcal{D}}[\phi(\vec{x})]$ for $\phi(\vec{x})$.  Then we define $\mathcal{R}_{\mathcal{D}}[\phi(\vec{x})]$ as a new relation $P(\vec{x})$ and endow it with the transitivity and the minimality.

%For the , we also has Proposition \ref{def_reg} holds. In addition, transitive closure formulas involved in abstractions often have simple structures, and that can be proved by existing first-order theorem provers.
% \begin{flalign*}
%   & \mathcal{R}_{\mathcal{D}}[\phi]\equiv[TC_{x,y}x=A\land  \\
%   &\hspace*{2.5em} y=B\lor on(x,y,s)\land x\neq A](x,C).
% \end{flalign*}

\section{Verification System and Experiment Results}
%
%In this section, we introduce our verification system and experiment results.
%
%\subsection{Verification System}

Based on the discussion in Section 4, we designed a sound abstraction verification system for g-planning. The inputs of our system include a g-planning problem coupled with state constraints, a refinement mapping, and an abstraction problem. The output of our system is $True$, $False$, or $Unknown$. g-planning problems in our system take the form of STRIPS-like problems. The only extension of g-planning problems to STRIPS problems is that their initial states can be first-order formulas with transitive closure (see Example \ref{eg-clearing a block}). The formalization of QNP abstractions in our system is the same as that in \cite{BonetG20}.

The workflow of our verification system is as follows:
\vspace*{+0.5mm}
\par\noindent \textbf{Step 1:} Given the input g-planning problem, the system automatically generates the LL BAT;

\vspace*{+0.5mm}
\par\noindent \textbf{Step 2:} Based on the input abstraction problem, refinement mapping, the generated LL BAT, and LL state constraints, the system generates the verification tasks that we mention in Section 4. Concretely, the system generates task 1 for the HL and the LL initial states; task 2 for each HL action; task 3$^{*}$ for each HL relational fluent;  task 4$^{\#}$ for each HL functional fluent; and task 5 for the HL and the LL goal states.

\par\noindent \textbf{Step 3:} The system feeds all the tasks above into the Z3-solver (version 4.8.10.0) \cite{smt} for verification. If all these tasks can be verified, then the system returns $True$, else returns $False$. If the theorem prover times out, the system returns $Unknown$.

	\vspace{1mm}
	\noindent {\bf Example \ref{eg-clearing a block} cont'd.} The refinement mapping about the HL relational $H$ and functional fluent $n$ are as follows:
	
	\vspace{1mm}
	\indent $m(H)=\exists x.holding(x)$; $m(n)=\# x. on^{+}(x,A)$.

\noindent The task 1 generated by our system is as follows:

 $\exists x. on^{+}(x,A)\land ontable(A)\land \neg holding(x) $\\
\indent \hspace*{4em}$\supset \neg \exists x. holding(x)\land \exists x. on^{+}(x,A).$

Our verification system was tested on 7 domains: $ClearA$ is our running example; $OnAB$ is about achieving the goal $On(A,B)$ in Blocksworld instances where the gripper is initially empty, and the blocks $A$ and $B$ are in different towers with blocks above them; $Logistics$ involves a vehicle whose goal is to load goods from the original location and transport them to the target location; $Gripper$ involves a robot with two grippers whose goal is to move all balls from room A to B; $GetLast$ and $FindA$ are both linked list domain problems. The predicate and the action sets of these two problems are the same. The goal of $GetLast$ is to traverse all the elements in a linked list, while $FindA$ aims at finding the element $A$ in a linked list; $Corner$ contains instances that an agent needs to navigate in a rectangle grid and arrive at the point $(0,0)$ from any other points $(x,y)$. Abstractions of the problems $ClearA$, $Gripper$, and $OnAB$ come from \cite{BonetFG19}, and abstractions of the problems $Logistic$, $GetLast$, $FindA$, $Corner$ are all provided by hand.

Our experiments were run on a Windows machine with a 3.7GHz CPU and 16GB RAM, the default time limit of each subtask was 10s. We summarize the experimental results in Table 1. \textit{\#A} is the number of HL actions. \textit{\#F} is the number of HL functional fluent $F$. \textit{\#P} is the number of HL relational fluent $P$. $T$ shows the total time costs of all verification tasks. The results show that all abstractions are sound.

\begin{table}
	\renewcommand{\arraystretch}{1.3}
	\caption{Experimental Results}
	\label{table_example}
	\centering
	\begin{tabular}{ccccccc}
		\hline
		\bfseries \textit{Domain} &  \bfseries \textit{\#A}
		&  \bfseries \textit{\#F} &  \bfseries \textit{\#P}   & \bfseries \textit{T(s)} &\bfseries  \textit{Result} \\
		\hline
		ClearA     & 2  & 1  & 2  & 4.1687  & True \\
		Gripper    & 4  & 5  & 2  & 7.0014  & True \\
		Logistics  & 4  & 3  & 2  & 6.0052  & True \\
		OnAB       & 4  & 3  & 8  & 10.2191  & True \\
		GetLast    & 2  & 1  & 1  & 3.5302  & True \\
		FindA      & 2  & 1  & 1  & 3.5369  & True \\
		Corner     & 2  & 2  & 0  & 3.6378  & True \\
		\hline
	\end{tabular}
\end{table}

\section{Conclusion}
%In this paper, we proposed a method for verifying sound abstractions for g-planning problems. based on Cui et al.'s work, we firstly present the proof-theoretic characterization for sound abstraction. Then, based on the characterization, we give a sufficient condition for sound abstractions which is first-order verifiable.  To implement it, we exploit regression extensions, and develop methods to handle counting and transitive closure. Finally, we implement a sound abstraction verification system and report experimental results on several domains.

% -------------------
In g-planning, solutions of sound abstractions are those with correctness guarantees for the original problems. In this paper,  based on Cui et al.’s work,  we explored automatic verification of sound abstractions for g-planning. We gave a proof-theoretic characterization of sound abstractions for g-planning in the situation calculus. Then, we got a sufficient  condition  for  sound  abstractions  which  is  first-order verifiable.  To implement it, we exploited  regression extensions and presented methods to handle counting and transitive closure. In the future, we are interested in automatic verification concerning trajectory constraints for non-deterministic abstractions, such as FOND. We are also interested in automatic learning abstractions and abstraction revision based on the verification of sound abstraction.

\section*{Acknowledgments}
 We acknowledge support from the Natural Science Foundation of China under Grant No. 62076261.

\bibliographystyle{named}
\bibliography{ijcai22}

\end{document}